\pdfoutput=1

\documentclass[11pt]{article}

\usepackage[final]{acl}

\usepackage{times}
\usepackage{latexsym}

\usepackage[T1]{fontenc}

\usepackage[utf8]{inputenc}

\usepackage{microtype}

\usepackage{inconsolata}

\usepackage{amsthm}
\newtheorem{proposition}{Proposition}
\usepackage{fontawesome5}
\usepackage{graphicx}
\usepackage{xcolor}
\usepackage{mdframed}
\usepackage[most]{tcolorbox}
\usepackage{subcaption} 
\usepackage{hyperref}
\usepackage{booktabs}
\usepackage{algorithm}
\usepackage[noend]{algorithmic}
\usepackage{amssymb}
\usepackage{CJKutf8}
\usepackage{colortbl}
\usepackage{multirow} 
\usepackage{amsmath}

\title{Anchored Sliding Window: Toward Robust and Imperceptible\\ Linguistic Steganography}

\author{
  Ruiyi Yan\textsuperscript{1} \quad
  Shiao Meng\textsuperscript{2} \quad
  Yugo Murawaki\textsuperscript{1} \\[6pt]
  \textsuperscript{1}Graduate School of Informatics, Kyoto University \\
  \textsuperscript{2}School of Software, Tsinghua University \\[4pt]
  \texttt{ruiyi@nlp.ist.i.kyoto-u.ac.jp}, \texttt{msa21@mails.tsinghua.edu.cn} \\
  \texttt{murawaki@i.kyoto-u.ac.jp}
}

\begin{document}
\maketitle
\begin{abstract}
Linguistic steganography based on language models typically assumes that steganographic texts are transmitted without alteration, making them fragile to even minor modifications.
While previous work mitigates this fragility by limiting the context window, it significantly compromises text quality.
In this paper, we propose the \textbf{anchored sliding window (ASW)} framework to improve imperceptibility and robustness. 
In addition to the latest tokens, the prompt and a \textbf{bridge context} are anchored within the context window, encouraging the model to \textit{compensate for the excluded tokens}.
We formulate the optimization of the bridge context as a variant of \textbf{prompt distillation}, which we further extend using self-distillation strategies.
Experiments show that our ASW significantly and consistently outperforms the baseline method in text quality, imperceptibility, and robustness across diverse settings.
The code is available at \faGithub~\href{https://github.com/ryehr/ASW\_steganography}{github.com/ryehr/ASW\_steganography}.
\end{abstract}

\section{Introduction}
As the demand for private communication to evade surveillance grows, users often turn to encryption. However, encrypted messages with unreadable content can be easily detected and are likely to be blocked in repressive environments~\cite{raman2020measuring, 10.1145/3460120.3484550}, as such communication is deemed suspicious. In contrast, steganography, which embeds messages into seemingly innocuous carriers such as images~\cite{9335027} or text~\cite{math9212829}, offers a more suitable approach against undue surveillance.

Among steganographic techniques, linguistic steganography, particularly methods based on large language models (LLMs), has attracted increasing attention in recent years~\cite{8470163, 9193914, 10.1145/3664647.3680562,yan2025comprehensive},  driven by the ubiquity of text on social media and the high-quality text generation enabled by rapidly advancing LLMs~\cite{NEURIPS2020_1457c0d6,achiam2023gpt}. 
In the classic steganographic scenario known as the ``Prisoners’ Problem''~\cite{Simmons1984}, Alice and Bob (the steganographers) attempt to exchange an escape plan while their communication is monitored by a warden, Eve (the steganalyzer), who will block communication if any message is deemed suspicious. Thus, in linguistic steganography, desirable steganographic texts (\textit{stegotexts}) are \textit{imperceptible} from normal texts~\cite{9193914, 9353234}.

\begin{figure}[!t]
 \centering
 \includegraphics[width=\columnwidth]{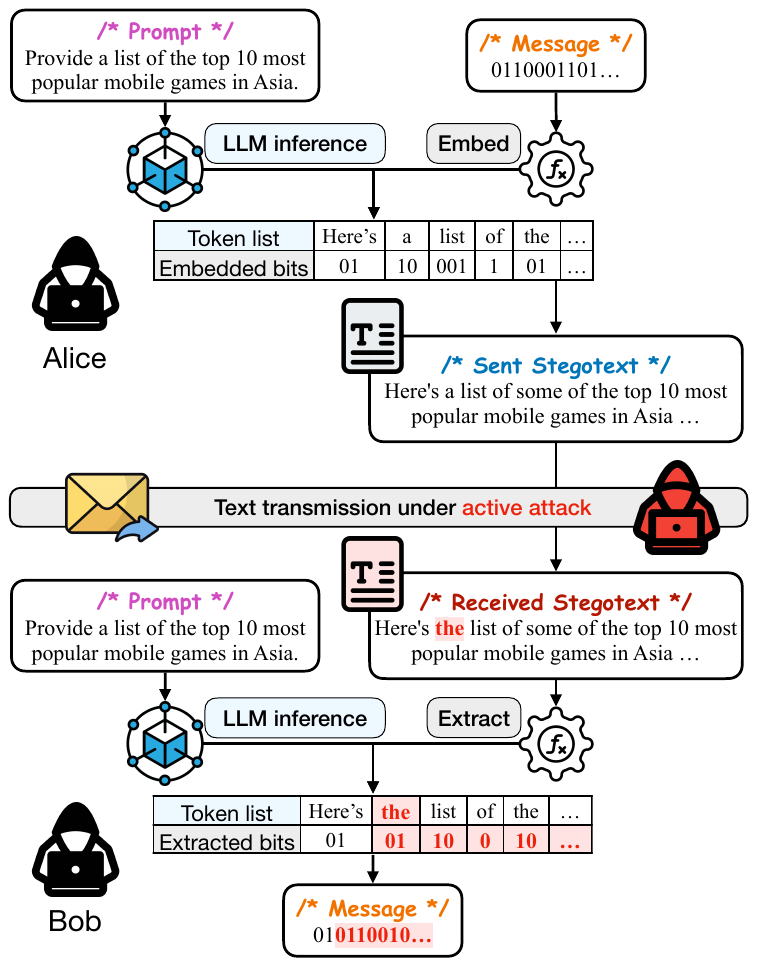} 
 \caption{Steganographic extraction is fragile to active attack, even when only a single token of the stegotext is modified. In this case, modifying the second token disrupts all subsequent autoregressive inferences, thus affecting their extracted results.}
 \label{fig: Stego_under_attack}
\end{figure}

\begin{figure*}[!t]
    \centering
    \begin{subfigure}[b]{0.32\textwidth}
        \centering
        \includegraphics[width=\linewidth]{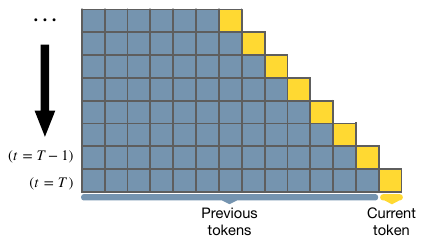} 
        \caption{Conventional robustness-\textbf{unaware} full context used in most methods.}
        \label{fig: full_context} 
    \end{subfigure}
    \hfill 
    \begin{subfigure}[b]{0.32\textwidth}
        \centering
        \includegraphics[width=\linewidth]{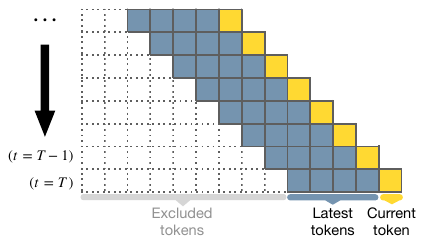} 
        \caption{Basic robustness-\textbf{aware} context window (e.g., used in~\citet{10888944}).}
        \label{fig: WinStega_context} 
    \end{subfigure}
    \hfill 
    \begin{subfigure}[b]{0.32\textwidth}
        \centering
        \includegraphics[width=\linewidth]{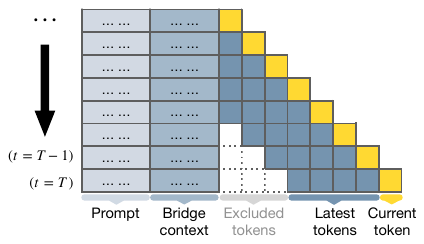}
        \caption{Refined robustness-\textbf{aware} context window used in our ASW framework.}
        \label{fig: ASW_context} 
    \end{subfigure}
    \caption{Comparison between the context window of our ASW and that of other methods. Each row corresponds to the sequence of generated tokens. The bottom row represents the current generation step ($t = T$), the row above corresponds to the previous step ($t = T -1$), an so forth.}
    \label{fig: Context_window} 
\end{figure*}

In addition to imperceptibility, robustness is also a crucial aspect of steganography~\cite{4663056, 5329453}. In the scope of \textit{symmetric} linguistic steganography (both Alice and Bob access the same language model and the same prompt for embedding and extraction), even a minor modification to a stegotext can distort the subsequent extraction relying on the subsequent inference. Figure~\ref{fig: Stego_under_attack} illustrates the fragility of steganographic extraction in robustness-unaware methods when stegotexts are subjected to \textit{active attacks}~\cite{backes2005public}, a vulnerability rooted in the \textit{autoregressive} inferences of LLMs.

Previous work~\cite{10888944} has been designed to improve robustness against external alterations (a form of active attack), such as substitution, deletion, and insertion. Its approach is to truncate the original context window and rely solely on the latest tokens for inference (as illustrated in Figure~\ref{fig: WinStega_context}). This design ensures that any alteration outside the latest tokens does not affect inference or extraction.
However, such a restricted context window causes text quality to degrade sharply because initial tokens are excluded~\cite{xiao2024efficient}.

Motivated to ensure robustness as well as text quality and imperceptibility of stegotexts, we propose the \textbf{ASW} (\textbf{A}nchored \textbf{S}liding \textbf{W}indow) framework, 
in which the context window is composed of the \textbf{prompt}, the \textbf{bridge context}, and the \textbf{latest tokens} (see Section~\ref{sec: Anchored Sliding Window (ASW)}). Figure~\ref{fig: Context_window} compares the context windows in our ASW and other methods.

For the bridge context, we start with an intuition: the \textbf{hard} bridge context (referring to discrete tokens, with an example in Figure~\ref{fig: Hard_Bridge_Context}). It is designed to help the model \textit{imagine} and compensate for the excluded part, to mitigate the divergence between inference based on the full context and ASW-based inference, thereby enhancing imperceptibility (see Section~\ref{sec: Intuition: Hard Bridge Context}).

Further, we propose a tunable \textbf{soft} bridge context (referring to continuous embedding vectors), optimized by a self-distillation framework, whose goal is to minimize the divergence between full-context and ASW-based inference (see Section~\ref{sec: Tunable Soft Bridge Context via Self-Distillation}).

Experiments (see Section~\ref{sec: Experiments}) show that ASW substantially outperforms the baseline method, in \textbf{imperceptibility}, \textbf{robustness}, and \textbf{text quality} (e.g., improving ROUGE-L by 104.08\% under Qwen2.5-7B-Instruct~\cite{qwen2.5}). We also evaluate ASW across hyperparameter, model, and dataset variants to demonstrate its \textbf{generalizability}.


\section{Background and Related Work}
\subsection{Language Model Basics}
A language model (LM) has a vocabulary $\mathcal{V}$ consisting of \textit{tokens}.
Tokens with negative indices, $[s^{(-N_p)},\dots,s^{(-1)}]$, represent a \textit{prompt} of length $N_p$ and $[s^{(0)},\dots,s^{(T-1)}]$ are tokens generated by an LM in response to the prompt.

LM next-token prediction at position $t$ is a function defined as:
\begin{equation}
    f_{\mathrm{LM}}(\boldsymbol{s}^{(t-1)};\theta) = \boldsymbol{l}^{(t)}
\end{equation}
in which $\boldsymbol{s}^{(t-1)}$ can be the full context $[s^{(-N_p)},\dots,s^{(t-1)}]$ at $t-1$, $\theta$ is the model parameters, and $\boldsymbol{l}^{(t)} = (l^{(t)}_1,\dots,l^{(t)}_{|\mathcal{V}|}) \in \mathbb{R}^{|\mathcal{V}|}$ is the logit vector, corresponding to each token 
in $\mathcal{V}$.
This discrete probability distribution can be calculated through the logit vector and the softmax function, i.e.,  $\boldsymbol{p}^{(t)} =(p^{(t)}_1,\dots,p^{(t)}_{|\mathcal{V}|}) = \text{softmax}(\boldsymbol{l}^{(t)}) = (\frac{\exp(l^{(t)}_1)}{\sum_{j=1}^{|\mathcal{V}|}\exp(l^{(t)}_j)},\dots,\frac{\exp(l^{(t)}_{|\mathcal{V}|})}{\sum_{j=1}^{|\mathcal{V}|}\exp(l^{(t)}_j)}) $ over the vocabulary.
The next token is then sampled from $\boldsymbol{p}^{(t)}$ (for example, multinomial sampling). 

\subsection{LM-based Steganography}
Alice (the sender) wants to communicate a secret message $m_s \sim$ $U(\{0,1\}^l)$ with Bob (the receiver) by embedding it in a natural-language text $t_s$ (a stegotext).
Alice and Bob have agreed on the steganographic embedding function $\mathcal{S}_{\text{emb}}$, the steganographic extracting function $\mathcal{S}_{\text{ext}}$, a language model $\text{LM}$, and LM parameters $\theta$.
These two functions are supposed to be invertible.
Specifically:
\begin{equation}
    \mathcal{S}_{\text{emb}}(f_{\mathrm{LM}}(\cdot;\theta),m_s) = t_s
\end{equation}
\begin{equation}
    \mathcal{S}_{\text{ext}}(f_{\mathrm{LM}}(\cdot;\theta),t_s)=m_s^{\prime}
\end{equation}
where $f_{\mathrm{LM}}(\cdot;\theta)$ denotes that this function is used autoregressively during the embedding and extraction processes and the input context window is dynamically updated.

There have been various methods on how to achieve $\mathcal{S}_{\text{emb}}$ and $\mathcal{S}_{\text{ext}}$, including methods based on block coding~\cite{fang-etal-2017-generating}, Huffman coding~\cite{8470163,dai-cai-2019-towards}, and arithmetic coding~\cite{ziegler-etal-2019-neural,shen-etal-2020-near}, and several much more sophisticated methods aimed at provably secure steganography~\cite{zhang-etal-2021-provably,10.1145/3460120.3484550,witt2023perfectly,10179287,wang2025sparsampefficientprovablysecure}.
Besides, single-step KL divergence matters for the imperceptibility of LM-based steganography (analyzed in Appendix~\ref{sec: imperceptibility_GLS}).

\subsection{Robustness of LM-based Steganography}
External alterations, such as token insertion, deletion, or replacement, in the stegotexts can lead to $m_s \neq m_s^{\prime}$ thus compromising the robustness of steganography. 
However, external alterations have been ignored in most linguistic steganographic methods. The lack of robustness consideration makes them less practical, as reliance on completely unchanged stegotext is difficult to achieve in uncontrolled channels under surveillance. Active attackers can alter the stegotexts, further complicating covert communication.

To mitigate the effect of external alterations, WinStega~\cite{10888944} introduces an adaptive robust enhancement framework, of which the key idea is to limit the adopted context window to the only latest tokens. Therefore, any alteration outside the adopted context window cannot affect inference and extraction. 
However, there are limitations in WinStega:
(1) The original prompt is overly excluded from the context window, although the prompt is initially held by both Alice and Bob and cannot be attacked externally, and the prompt, as initial tokens, matters for the subsequent inference~\cite{xiao2024efficient}.
(2) WinStega introduces an entropy-threshold mechanism that relies on computing the entropy of tokens in the context window, but the computation relies on earlier tokens and inference. Therefore, this mechanism relies on tokens outside the context window, which negatively affects robustness.

Another more recent attempt related to robust linguistic steganography is proposed by~\citet{11023508}. This work focuses on an asymmetric scenario where extraction is not based on model inference, at the tremendous cost of embedding capacity (bits per token $ \in [10^{-3}, 10^{-1}]$ in most cases).
As this work lies outside the mainstream symmetric settings and suffers from impractically limited embedding capacity, it is reasonable to consider it to be not comparable to our ASW.

\section{Anchored Sliding Window (ASW)}
\label{sec: Anchored Sliding Window (ASW)}

In this work, we are inspired by the idea of improving robustness by limiting the context window and the sliding window mechanism~\cite{10888944}.
To preserve semantic information while maintaining robustness, we structure our anchored sliding window (ASW) in which the \textbf{prompt} is anchored as the first segment. We then introduce a \textbf{bridge context} as the second segment, designed to help the model compensate for the excluded tokens. The final segment of the ASW is the \textbf{latest tokens} (see Figure~\ref{fig: ASW_context}).
The design of the bridge context adheres to the following principles:
\begin{itemize}
    \item It is anchored immediately after the prompt.
    \item It is \textbf{independent} of both the content and position of the excluded segment.
\end{itemize}
This independence is critical: any dependency on the excluded segment would render the bridge context sensitive to changes in that segment, thereby undermining robustness. 

Formally, the basic context window for robustness (see Figure~\ref{fig: WinStega_context}) is defined as: $\boldsymbol{s}_{\text{window}}^{(t-1)} = [s^{(\max(-N_p,t-w))},...,s^{(t-1)}]$, where $N_p$ is the prompt length.
In ASW, the context window is: 
\begin{equation}
    \boldsymbol{s}_{\text{window}}^{(t-1)} = \boldsymbol{s}_{\text{prompt}} \Vert \boldsymbol{s}_{\text{bridge}} \Vert \boldsymbol{s}_{\text{latest}}^{(t-1)}
\end{equation}
where $\boldsymbol{s}_{\text{prompt}} = [s^{(-N_p)},...,s^{(-1)}]$, $\boldsymbol{s}_{\text{bridge}}$ is a customizable bridge context, and $\boldsymbol{s}_{\text{latest}}^{(t-1)} = [s^{(\max(0,t-w))},...,s^{(t-1)}]$. Notice that $w$ is the length of the latest tokens, assuming $s^{(t-w)}$ exists within the valid range. Analysis of the robustness of ASW is detailed in Appendix~\ref{appendix:Analysis of Robustness}, suggesting that the number of influenced positions increases  (i.e., robustness decreases) as $w$ increases in most cases.

\begin{figure}[!t]
 \centering
 \includegraphics[width=\columnwidth]{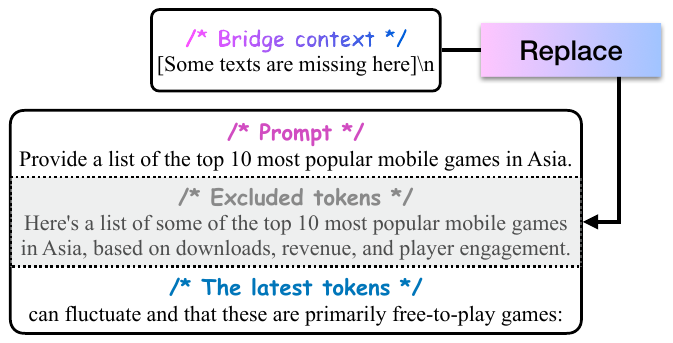} 
 \caption{In ASW, the bridge context replaces the excluded tokens and connects the prompt and the latest tokens. In this example, a hard bridge context is adopted.}
 \label{fig: Hard_Bridge_Context}
\end{figure}

\begin{table*}[!t]
\renewcommand{\arraystretch}{1.0}
\centering
\scalebox{0.88}{
\begin{tabular}{l|l|ccccc}
\toprule[1.0pt]
 \textbf{Context window}  & \textbf{Hard bridge context}   & $w = 10$ & $w = 20$ & $w = 30$ & $w = 40$ & $w = 50$ \\ 
\midrule[1.0pt]
\textbf{Basic} (Figure~\ref{fig: WinStega_context}) & N/A  & \textcolor{red!75!black}{\textbf{3.831}}  & \textcolor{red!75!black}{\textbf{2.054}} &  \textcolor{red!75!black}{\textbf{1.390}}  &  \textcolor{red!75!black}{\textbf{1.023}}  &  \textcolor{red!75!black}{\textbf{0.777}}  \\ \hline
\multirow{7}{*}{\textbf{ASW} (Figure~\ref{fig: ASW_context})} & 0 token  & 2.476  & 1.458   & 1.003   & 0.766   & 0.568   \\ \cline{2-7}
 & Random 5 tokens  & 2.681 &  1.529  &  1.071  & 0.774   &  0.589  \\
 & Random 10 tokens  & 2.705 &  1.580  &  1.094  &  0.775  &  0.592   \\ \cline{2-7}
 & \texttt{``[Some texts are missing here]\textbackslash n''}  & 2.279  & 1.303   &  0.936  &  0.690  & 0.507   \\
 & \texttt{``[previous message removed]\textbackslash n''} & 2.240  & 1.315   &  0.937  & 0.682   & 0.505   \\
 & \texttt{``[CONTEXT TRUNCATED]\textbackslash n''} & 2.201 &  1.268  &  \textcolor{green!50!black}{\textbf{0.904}}  &  \textcolor{green!50!black}{\textbf{0.655}}  & 0.496   \\
 & \texttt{``...\textbackslash n''} & \textcolor{green!50!black}{\textbf{2.195}} &  \textcolor{green!50!black}{\textbf{1.266}}  &  0.908  &  0.664  & \textcolor{green!50!black}{\textbf{0.487}}  \\
\bottomrule[1.0pt]
\end{tabular}}
 \caption{Average KL divergence $D_{\text{KL}}(\boldsymbol{p}_{\text{full}}^{(t)}||\boldsymbol{p}_{\text{window}}^{(t)})$ at each step $t$ under different settings (lower is better). \textcolor{red!75!black}{\textbf{Red}} values indicate the highest (the worst), and \textcolor{green!50!black}{\textbf{green}} values indicate the lowest (the best) in each column.}
 \label{table: KL_hard}
\end{table*}

\begin{figure*}[!t]
 \centering
 \includegraphics[width=\textwidth]{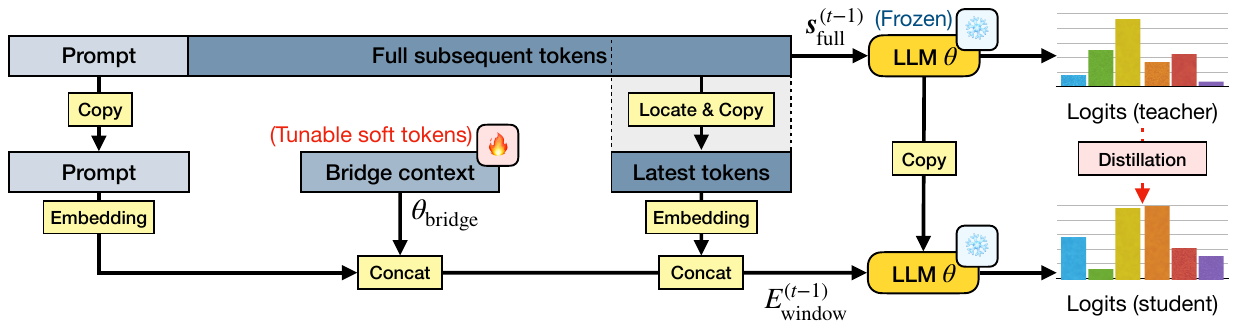} 
 \caption{An overview of our proposed distillation framework. All the parameters of the language model are not tunable, and only the soft bridge context is tunable. The distillation objective is to mitigate the gap between logits based on the full context $\boldsymbol{s}_{\text{full}}^{(t-1)}$ (teacher) and the logits based on a $E_{\text{window}}^{(t-1)}$ (student).}
 \label{fig: Self-disttilation}
\end{figure*}

\section{Intuition: Hard Bridge Context}
\label{sec: Intuition: Hard Bridge Context}
To convey the core intuition behind our work and facilitate the explanation of subsequent methods, we begin with the \textbf{hard} bridge context, where tokens are discretely represented in a vocabulary~\cite{lester-etal-2021-power}.
Similar to prompt engineering, a hard bridge context can be manually designed, for example, using a placeholder like \texttt{``[Some texts are missing here]\textbackslash n''}, to indicate to the language model that part of the input is missing.
In this setting, Figure~\ref{fig: Hard_Bridge_Context} presents an illustrative example of how a full context is processed within our ASW framework at a single generative step.

Next, we conducted preliminary experiments to validate the effectiveness of our intuition and the hard bridge context.
In the context of LM-based steganography, imperceptibility (Appendix~\ref{sec: imperceptibility_GLS}) is closely associated with the the average single-step KL divergence, which serves as a key indicator for quantifying the discrepancy between normal text and text generated under interference.
For now, we set aside the steganographic setting and focus solely on evaluating how limited context windows influence the average single-step KL divergence.

Specifically, the KL divergence at the $t^{\text{th}}$ inference step is defined as:
\begin{align}
    & D_{\text{KL}}(\boldsymbol{p}_{\text{full}}^{(t)}||\boldsymbol{p}_{\text{window}}^{(t)}) \notag \\
    & = \sum_{i=1}^{|\mathcal{V}|}\boldsymbol{p}^{(t)}_{\text{full}}(i) \times \log\left(\frac{\boldsymbol{p}^{(t)}_{\text{full}}(i)}{\boldsymbol{p}^{(t)}_{\text{window}}(i)}\right)
\end{align}
where $\boldsymbol{p}_{\text{full}}^{(t)}$ denotes the model’s output distribution given the full context $\boldsymbol{s}_{\text{full}}^{(t-1)} = [s^{(-N_p)},...,s^{(t-1)}]$ (standard inference),  and $\boldsymbol{p}_{\text{window}}^{(t)}$ denotes the output distribution given a truncated or structured context window, $\boldsymbol{s}_{\text{window}}^{(t-1)}$. These distributions are: 
\begin{equation}
    \boldsymbol{p}_{\text{full}}^{(t)} = \text{softmax}(f_{\text{LM}}(\boldsymbol{s}_{\text{full}}^{(t-1)} ; \theta))
\end{equation}
\begin{equation}
    \boldsymbol{p}_{\text{window}}^{(t)} = \text{softmax}(f_{\text{LM}}(\boldsymbol{s}_{\text{window}}^{(t-1)}; \theta)).
\end{equation}


Table~\ref{table: KL_hard} reports the average KL divergence $D_{\text{KL}}(\boldsymbol{p}_{\text{full}}^{(t)}||\boldsymbol{p}_{\text{window}}^{(t)})$ during inference using WinStega or our ASW, evaluated across $w \in \{10,20,30,40,50\}$. 
The experiments were conducted using the Qwen2.5-7B-Instruct language model~\cite{qwen2.5}, with 500 samples per setting.
Each sample was derived from a randomly selected question-answer (QA) pair from the InstructionWild dataset~\cite{instructionwild}, with a maximum sequence length of 512 tokens.
From the table, we observe the following key findings:

\textbf{(1) Prompt matters:} Compared to the basic context window, ASW without a hard bridge context (i.e., with 0 tokens) shows a substantial reduction in divergence. This supports the idea of \textit{attention sinks}~\cite{xiao2024efficient}.

\textbf{(2) Design of bridge context matters:} Compared to using 0 tokens or randomly selected 5 or 10 tokens as the bridge context, a human-readable and instructive bridge context further reduces KL divergence. This suggests that the language model is able to utilize the semantic cues to better infer or \textit{imagine} the excluded content.

\section{Tunable Soft Bridge Context via Self-Distillation}
\label{sec: Tunable Soft Bridge Context via Self-Distillation}
Since the design of the bridge context plays an important role in narrowing the gap between inference based on the full context and that based on  $\boldsymbol{s}_{\text{window}}$,  we are motivated to explore the  \textit{optimal} bridge context for our ASW framework.
However, the hard bridge context lacks a clear design principle, making systematic optimization difficult. To address this, we propose using a tunable \textbf{soft} bridge context to connect the prompt and the latest tokens, which can be trained with an explicit objective.

\subsection{Soft Context Window}

This approach aligns with the concept of \textit{prompt tuning}~\cite{lester-etal-2021-power}, in which the soft bridge context is no longer composed of discrete tokens of the vocabulary. 
Instead, given a bridge context length $l_{\text{bridge}}$, the soft bridge context $\boldsymbol{s}_{\text{bridge}}$ is represented as a matrix $E_{\text{bridge}} \in \mathbb{R}^{l_{\text{bridge}} \times e}$, where $e$ is the dimension of the embedding space.
In our ASW framework, when a soft bridge context is employed, the context window is constructed at the embedding level (at time $t-1$) as $E_{\text{window}}^{(t-1)} = [E_{\text{prompt}}; E_{\text{bridge}}; E_{\text{latest}}^{(t-1)}]$.
$E_{\text{bridge}}$ is also denoted as tunable parameters $\theta_{\text{bridge}}$, and $E_{\text{window}}^{(t-1)}$ is also denoted as $E_{\text{window}}^{(t-1)}({\theta_{\text{bridge}}})$, that is:
\begin{equation}
E_{\text{window}}^{(t-1)}({\theta_{\text{bridge}}}) = [E_{\text{prompt}}; \theta_{\text{bridge}}; E_{\text{latest}}^{(t-1)}].
\end{equation}

\subsection{Self-Distillation}
In this section, we present the training strategy for the soft bridge context.
Figure~\ref{fig: Self-disttilation} provides an overview of the framework used to tune the soft bridge context. This process can be viewed as a form of distillation, specifically a variant of \textbf{prompt distillation}~\cite{10.1145/3583780.3615017}, where one entity (the student) learns to mimic another entity (the teacher). Specifically:
\begin{itemize}
    \item The \textbf{teacher output} is the logit vector for the last token, inferred from the full context.
    \item The \textbf{student output} is the logit vector for the last token, inferred from $E_{\text{window}}^{(t-1)}({\theta_{\text{bridge}}})$.
\end{itemize}
Since both the teacher and student outputs are generated by the same frozen language model, this setup constitutes a form of \textbf{self-distillation}.

We propose a loss function which is based on the single-step forward KL divergence (for each step), because forward KL divergence is a significant indicator of imperceptibility (Appendix~\ref{sec: imperceptibility_GLS}) and also complies with common methodology of knowledge distillation~\cite{kim2016sequence, sanh2020distilbertdistilledversionbert, song2020lightpafftwostagedistillationframework,Li_2024_CVPR}. Specifically, the loss function of our self-distillation strategy is:
\begin{align}
\label{eq: Forward KL}
    & \mathcal{L}_{\text{forward}}(\theta_{\text{bridge}}) \notag  =  \frac{1}{T}\sum_{t=0}^{T-1}D_{\text{KL}}\\ &(\text{softmax}(\boldsymbol{l}^{(t)}_{\text{teacher}})  \Vert  \text{softmax}(\boldsymbol{l}^{(t)}_{\text{student}}(\theta_{\text{bridge}})))
\end{align}
where $T$ is the length of the non-prompt part, $\boldsymbol{l}^{(t)}_{\text{teacher}} = f_{\text{LM}}(\boldsymbol{s}_{\text{full}}^{(t-1)}; \theta)$ and $\boldsymbol{l}^{(t)}_{\text{student}}(\theta_{\text{bridge}}) = f_{\text{LM\_emb}}(E_{\text{window}}^{(t-1)}(\theta_{\text{bridge}}); \theta)$. $f_{\text{LM\_emb}}(\cdot)$ is a variant of $f_{\text{LM}}(\cdot)$ to process embeddings instead of tokens.

In addition to using the forward KL divergence as the loss function (Equation~\ref{eq: Forward KL}), we also explore the use of reverse KL divergence for distillation~\cite{gu2024minillm} in practice, specifically: 
\begin{align}
\label{eq: reverse KL}
 & \mathcal{L}_{\text{reverse}}(\theta_{\text{bridge}}) \notag   =  \frac{1}{T}\sum_{t=0}^{T-1}D_{\text{KL}} \\ & ( \text{softmax}(\boldsymbol{l}^{(t)}_{\text{student}}(\theta_{\text{bridge}})) \Vert  \text{softmax}(\boldsymbol{l}^{(t)}_{\text{teacher}})).
\end{align}

\begin{algorithm}[!t]
\small
\caption{Steganographic embedding in ASW (with a soft bridge context)}\label{algorithm: Steganographic embedding in ASW (with soft bridge context)} 
\textbf{Input:}\\
Prompt, $\boldsymbol{s}_{\text{prompt}} = [s^{(-N_p)},\dots,s^{(-1)}]$ \\
Secret message, $m_s$  \\
Parameters of the language model, $\theta$ \\
Parameters of a soft bridge context after training, $\theta_{\text{bridge}}$ \\
Length of the latest tokens, $w$ \\
{\textbf{Output:}}\\
Steganographic text, $t_s$\\

\begin{algorithmic}[1]
\STATE $\boldsymbol{s}_{\text{generated}} \leftarrow []$;
\FOR{$t = 0,1,\dots$}
    \STATE $\boldsymbol{s}_{\text{latest}} \leftarrow \boldsymbol{s}_{\text{generated}}[-w:]$;
    \STATE Get the embedding matrices $E_{\text{prompt}}$ and $E_{\text{latest}}$ from $\boldsymbol{s}_{\text{prompt}}$ and $\boldsymbol{s}_{\text{latest}}$;
    \STATE $E_{\text{window}} \leftarrow [E_{\text{prompt}}; \theta_{\text{bridge}} ;E_{\text{latest}}]$;
    \STATE $\boldsymbol{p}^{(t)}_{\text{window}} \leftarrow \text{softmax}(f_{\text{LM\_emb}}(E_{\text{window}};\theta))$; 
    \STATE Use the steganographic embedding algorithm,  $\boldsymbol{p}^{(t)}_{\text{window}}$, and $m_s$ to generate the next token $s^{(t)}$;
    \STATE $\boldsymbol{s}_{\text{generated}} \leftarrow \boldsymbol{s}_{\text{generated}} \Vert s^{(t)}$;
\ENDFOR

\STATE Detokenize $\boldsymbol{s}_{\text{generated}}$ to $t_s$;
\RETURN $t_s$

\end{algorithmic}

\end{algorithm}

\section{Steganography in ASW Framework}

In this section, we describe how linguistic steganography is implemented within the ASW framework.
Algorithm~\ref{algorithm: Steganographic embedding in ASW (with soft bridge context)} and Algorithm~\ref{algorithm: Steganographic extraction in ASW (with soft bridge context)} detail the embedding and extraction procedures, respectively, for steganographic communication using ASW framework with a trained \textbf{soft} bridge context.

Compared to conventional steganographic methods, the ASW framework involves a reorganization of the context window, specifically, in Lines 3--5 of the embedding algorithm and Lines 4--6 of the extraction algorithm.
Note that the specific steganographic method (e.g., arithmetic coding~\cite{ziegler-etal-2019-neural}) is abstracted away in this presentation and is represented in Line 7 of the embedding algorithm and Line 8 of the extraction algorithm without further elaboration.

For completeness, the embedding and extraction procedures using a \textbf{hard} bridge context are provided in Algorithms~\ref{algorithm: Steganographic embedding in ASW (with hard bridge context)} and~\ref{algorithm: Steganographic extraction in ASW (with hard bridge context)} in the Appendix.

\begin{algorithm}[!t]
\small
\caption{Steganographic extraction in ASW (with a soft bridge context)}\label{algorithm: Steganographic extraction in ASW (with soft bridge context)} 
\textbf{Input:}\\
Prompt, $\boldsymbol{s}_{\text{prompt}} = [s^{(-N_p)},\dots,s^{(-1)}]$ \\
Steganographic text, $t_s$\\
Parameters of the language model, $\theta$ \\
Parameters of a soft bridge context after training, $\theta_{\text{bridge}}$ \\
Length of the latest tokens, $w$ \\
{\textbf{Output:}}\\
Secret message, $m_s^{\prime}$  \\

\begin{algorithmic}[1]
\STATE $m_s^{\prime} \leftarrow \emptyset$;
\STATE Tokenize $t_s$ to $\boldsymbol{s}_{\text{generation}} = [s^{(0)},s^{(1)},\dots]$;
\FOR{$t = 0,1,\dots$}
    \STATE $\boldsymbol{s}_{\text{latest}} \leftarrow \boldsymbol{s}_{\text{generated}}[t-w:t]$;
    \STATE Get the embedding matrices $E_{\text{prompt}}$ and $E_{\text{latest}}$ from $\boldsymbol{s}_{\text{prompt}}$ and $\boldsymbol{s}_{\text{latest}}$;
    \STATE $E_{\text{window}} \leftarrow [E_{\text{prompt}}; \theta_{\text{bridge}} ;E_{\text{latest}}]$;
    \STATE $\boldsymbol{p}^{(t)}_{\text{window}} \leftarrow \text{softmax}(f_{\text{LM\_emb}}(E_{\text{window}};\theta))$; 
    \STATE Use the steganographic extraction algorithm,  $\boldsymbol{p}^{(t)}_{\text{window}}$, and $s^{(t)}$ to update $m_s^{\prime}$;
\ENDFOR
\RETURN $m_s^{\prime}$

\end{algorithmic}

\end{algorithm}

\begin{table*}[!t]
\renewcommand{\arraystretch}{1.0}
\centering
\scalebox{0.68}{
\begin{tabular}{l|l|cccc|c|cc}
\toprule[1pt]
\multirow{2}{*}{\textbf{Method}} & \multirow{2}{*}{\textbf{Bridge context}} & \multicolumn{4}{c|}{\textbf{Text quality}} & \textbf{Imperceptibility}      & \multicolumn{2}{c}{\textbf{Efficiency}} \\ \cline{3-9}
                        &                                    & $\Delta$PPL$\downarrow$  & BLEU$\uparrow$  & ROUGE-L$\uparrow$ & BERTScore$\uparrow$ & Steganalysis ACC$\downarrow$ & Capacity$\uparrow$     & Time (s)$\downarrow$     \\ \hline \rowcolor{gray!20} 
\textbf{Full context}            & N/A    &  1.090   &  0.405     &  0.257     &    0.856       &       0.560                &   1.024      &   115.43   \\  \hline
\textbf{WinStega}                & N/A                                &  \textcolor{red!75!black}{\textbf{31.486}}    &  \textcolor{red!75!black}{\textbf{0.082}}  &    \textcolor{red!75!black}{\textbf{0.098}}   &    \textcolor{red!75!black}{\textbf{0.699}}       &       \textcolor{red!75!black}{\textbf{0.955}}                &  1.995       &       22.250               \\ \hline
  \textbf{ASW}  & 0 token       &   5.157   &   0.220    &  0.190     &     0.806      &        0.885               &    0.966     &      29.702                \\ \hline
                 \multirow{2}{*}{\textbf{ASW (hard)}}       & \texttt{``[CONTEXT TRUNCATED]\textbackslash n''}                             &  3.246    &   0.242    &   0.195    &    0.816       &    0.830                   &   1.012      &   30.868                   \\
                        & \texttt{``...\textbackslash n''}                             &  4.560    &   0.248    &   0.195    &     0.816      &         0.815              &     1.035    &       30.505               \\ \cline{1-9}
\multirow{3}{*}{\textbf{ASW (soft)}}   & Random untrained  $\theta_{\text{bridge}}$                           &  4.066    &   0.200    &   0.185    &    0.797       &    0.910      &     0.980    &    31.819                  \\
                        & $\theta_{\text{bridge}}$ trained with forward KL                       & \textcolor{green!50!black}{\textbf{0.201}}    &   \textcolor{green!50!black}{\textbf{0.276}}    &  \textcolor{green!50!black}{\textbf{0.200}}     & \textcolor{green!50!black}{\textbf{0.828}}           &     \textcolor{green!50!black}{\textbf{0.745}}                  &   1.606      &     32.594                 \\
                        & $\theta_{\text{bridge}}$ trained with reverse KL                      &  4.336    &   0.267    &    \textcolor{green!50!black}{\textbf{0.200}}    &    0.825       &       0.770                &    1.124     & 31.499   \\ \bottomrule[1pt]                  
\end{tabular}}
 \caption{Average results across various metrics under different methods, where the setting is Qwen2.5-7B-Instruct, $l_{\text{bridge}} = 8$, and $w = 10$. \textcolor{red!75!black}{\textbf{Red}} values indicate the worst, and \textcolor{green!50!black}{\textbf{green}} values indicate the best in each column.}
 \label{table: Main result}
\end{table*}

\section{Experiments}
\label{sec: Experiments}
\subsection{General Setup}
\label{sec: General Setup}
To perform linguistic steganography, we adopted arithmetic coding~\cite{ziegler-etal-2019-neural} as the steganographic method, as it offers a strong trade-off between efficiency and imperceptibility.
For our ASW framework, the bridge context settings are as follows:
\textbf{(1) Hard bridge context:} Based on the results in Table~\ref{table: KL_hard}, we selected \texttt{``[CONTEXT TRUNCATED]\textbackslash n''} and  \texttt{``...\textbackslash n''} as two effective hard bridge contexts $\boldsymbol{s}_{\text{bridge}}$ in subsequent experiments.
\textbf{(2) Soft bridge context:} To train a soft context bridge $\theta_{\text{bridge}}$ via self-distillation, we randomly selected 12,500 training and 1,000 validation samples from the InstructionWild dataset~\cite{instructionwild}. Training is conducted for 10 epochs, and the soft bridge context yielding the lowest validation loss was selected. We used the AdamW optimizer~\cite{loshchilov2018decoupled} with a learning rate of $1\times 10^{-3}$. 

Specifically, we used top-$p$ ($p=1.0$) sampling and temperature $\tau = 1.0$.
Generation ends when \texttt{<eos>} is generated (the maximum length is 512), and an infinite and random sequence of bits was provided for embedding.


\subsection{Main Experiments}
\label{sec: Main Experiments}
In this section, we conducted experiments to show the superiority of our ASW framework compared to the baseline method, WinStega~\cite{10888944}.
Specifically, we used the Qwen2.5-7B-Instruct model~\cite{qwen2.5}, with the length of the soft bridge context set to $l_{\text{bridge}} = 8$ and the length of the latest tokens $w = 10$ (referring to WinStega). For evaluation, 500 questions from the InstructionWild dataset were used as prompts to generate 500 samples for each experimental group.

\subsubsection{Metrics}
Reference texts are the non-steganographic texts generated by multinomial sampling based on full-context inference prompted by the same 500 questions for evaluation.
The metrics for text quality are based on similarity between reference texts and stegotexts, including (1) the absolute difference in perplexity $\Delta \text{PPL}$~\cite{jelinek1977perplexity} (the average perplexity of the reference texts is 15.148 in this setting). (2) BLEU (2-gram)~\cite{lin-och-2004-orange}, (3) ROUGE-L~\cite{lin-2004-rouge}, and (4) BERTScore~\cite{bert-score}. The metric for imperceptibility is based on steganalysis accuracy, of which the experimental details are shown in Appendix~\ref{sec: steganalysis}. Besides, the metrics for efficiency include (1) embedding capacity (embedded bits per token), and (2) embedding time for generating a stegotext (seconds).

\begin{figure*}[!t]
    \centering
    \begin{subfigure}[b]{\textwidth} 
        \centering
        \includegraphics[width=\textwidth]{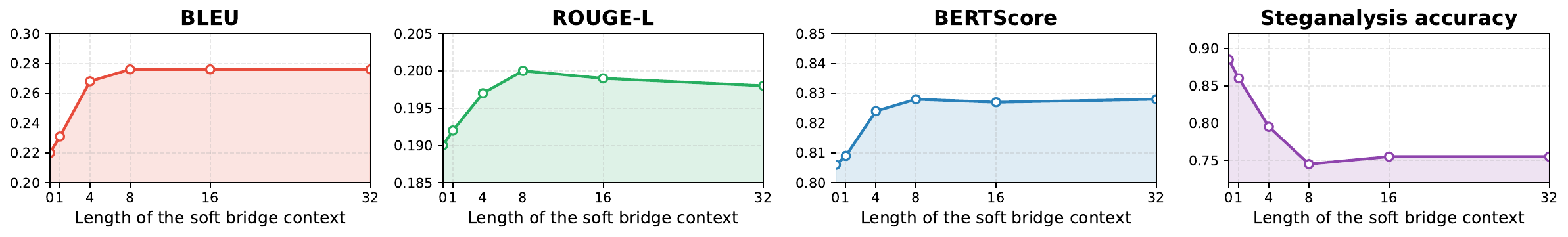} 
        \caption{$\theta_{\text{bridge}}$ trained with forward KL.}
        \label{subfig: Bridge_length_forward_quality} 
    \end{subfigure}
    \hfill 
    \begin{subfigure}[b]{\textwidth}
        \centering
        \includegraphics[width=\textwidth]{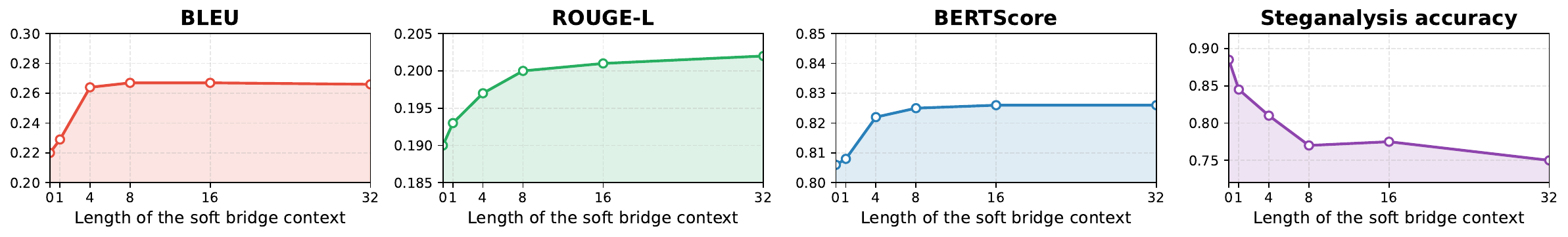} 
        \caption{$\theta_{\text{bridge}}$ trained with reverse KL.}
        \label{subfig: Bridge_length_reverse_quality} 
    \end{subfigure}
    \caption{Average values of various metrics when the length of the soft bridge context $l_{\text{bridge}}$ varies.}
    \label{fig: Bridge_length_quality} 
\end{figure*}

\begin{figure*}[!t]
 \centering
 \includegraphics[width=\textwidth]{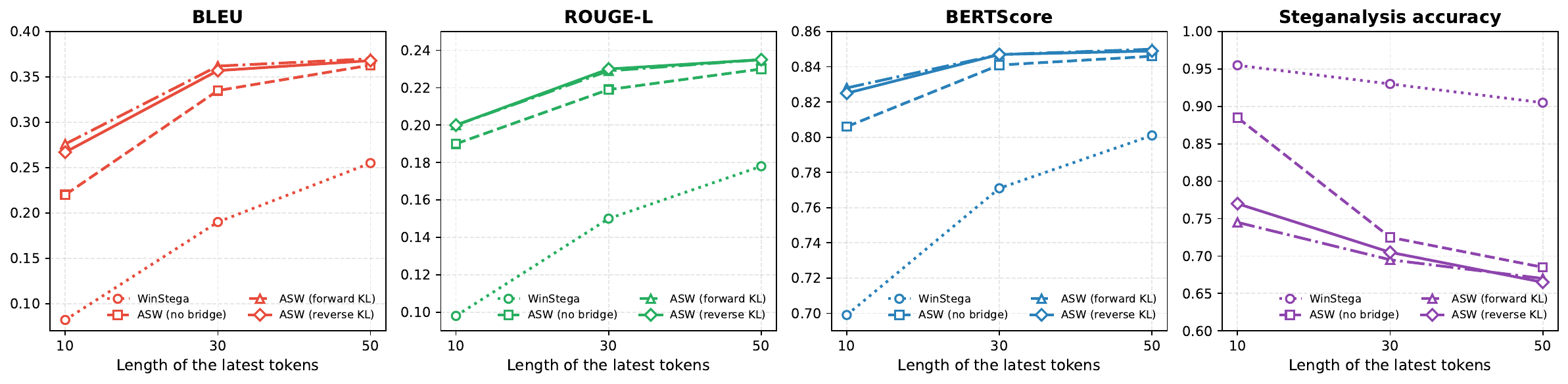} 
 \caption{Average values of various metrics under WinStega or our proposed ASW framework equipped with different strategies when the length of the latest tokens $w$ varies.}
 \label{fig: Latest_length}
\end{figure*}

\subsubsection{Results and Analysis}
Table~\ref{table: Main result} reports the average results across various metrics using different methods. In addition to the baseline method (WinStega) and our ASW variants, the table also includes steganographic performances under the \textbf{full context} setting, which corresponds to the conventional robustness-unaware linguistic steganography for reference (see Figure~\ref{fig: full_context}).
From the table, the key findings are as follows:

\textit{1)} Our ASW substantially outperforms WinStega in text quality and imperceptibility. Comparing their best performances, ASW lowers $\Delta$PPL by 99.36\% (31.486 $\rightarrow$ 0.201), improves BLEU by 236.59\% (0.082 $\rightarrow$ 0.276), improves ROUGE-L by 104.08\% (0.098 $\rightarrow$ 0.200), improves BERTScore by 18.45\% (0.699 $\rightarrow$ 0.828), and improves the anti-detection capacity, i.e., lowering steganalysis accuracy by 21.99\% (0.955 $\rightarrow$ 0.745).

\textit{2)} Considering the embedding capacity of 1.024 under the full-context setting as the standard capacity, most of our ASW capacities are close to this value, whereas WinStega achieves a much higher embedding capacity.
One reason is that WinStega uses only the most recent $w$ tokens, which increases the freedom and entropy of generation but at the cost of less reliable semantic information.

\textit{3)} The runtime for generating a stegotext is highly correlated with the number of tokens in the context window. WinStega achieves the highest speed, while full-context steganography requires significantly more time than WinStega and ASW.

\begin{table}[!t]
\centering
\scalebox{0.85}{
\begin{tabular}{l|l|ccc}
\toprule[1pt]
                      \multicolumn{2}{c|}{Method}           & \multicolumn{1}{c}{$m = 1$} & \multicolumn{1}{c}{$m = 2$} & \multicolumn{1}{c}{$m = 3$} \\ \hline \rowcolor{gray!20} 
\multicolumn{2}{c|}{Full context} & 49.96\% & 34.59\% & 24.61\% \\ \hline
\multirow{2}{*}{$w = 10$} & WinStega &  96.35\%      &  92.93\%    &  89.98\%  \\
                        & ASW      & \textbf{97.95\%}      & \textbf{96.21\%}      & \textbf{94.12\%}         \\ \hline

\multirow{2}{*}{$w = 30$} & WinStega &            89.97\%            &        81.01\%                &      72.90\%               \\
& ASW      &    \textbf{94.28\%}   &   \textbf{88.95\%}    &   \textbf{83.67\%}      \\ \hline
\multirow{2}{*}{$w = 50$} & WinStega &            84.40\%               &     72.95\%                      &       58.13\%                    \\
                        & ASW      &     \textbf{90.03\%}        &      \textbf{82.55\%}   &   \textbf{74.90\%}    \\ \bottomrule[1pt]                   
\end{tabular}}
 \caption{The ratio of positions with unaffected inference when the number $m$ of modified tokens and the latest token length $w$ vary.}
 \label{table: robustness}
\end{table}

\subsection{Robustness Scenarios}
\label{sec: Robustness Scenarios}
We simulated real-world adversarial attacks by randomly modifying $m$ tokens in the generated stegotexts, following the setup (except $w$) in Section~\ref{sec: Main Experiments}. Since steganographic encoding methods can differ, we focused on a more generalizable factor: whether inference is affected. Accordingly, we used the ratio of positions with \textbf{unaffected} inference to indicate robustness. Table~\ref{table: robustness} lists these ratios under three setups: robustness-unaware full context, WinStega and our ASW (with a hard bridge context).\footnote{The robustness of ASW is independent of whether the bridge context is hard or soft, and depends only on the length of the latest tokens (see Appendix~\ref{appendix:Analysis of Robustness}).}
ASW outperforms WinStega in robustness across different $(w,m)$ pairs, since WinStega relies on tokens outside the context window to calculate entropy, thereby compromising robustness. Moreover, robustness decreases as $w$ increases, consistent with Proposition~\ref{proposition: E(w)} in Appendix~\ref{appendix:Analysis of Robustness}.

The rationale for using the ratio of positions with unaffected inference as an indicator of robustness is provided in Appendix~\ref{sec: Supplementary Robustness Scenarios}, which also includes two additional scenarios: deletion and insertion.

\subsection{Effect of Hyperparameter Variants}
We studied the effects of the length of the soft bridge context $l_{\text{bridge}}$ and the length of the latest tokens $w$. The setup follows Section~\ref{sec: Main Experiments}. 

Figure~\ref{fig: Bridge_length_quality} shows the variation of several metrics as $l_{\text{bridge}}$ changes ($w = 10$). The results indicate that when $l_{\text{bridge}} \geq 8$, further increasing $l_{\text{bridge}}$ provides little improvement in text quality or imperceptibility, while substantially increasing computational overhead (see raw data in Table~\ref{table: Effect of l} in the Appendix). It offers rationality to set $l_{\text{bridge}} = 8$ in Section~\ref{sec: Main Experiments}.

Figure~\ref{fig: Latest_length} shows how several metrics vary with $w$ when $l_{\text{bridge}} = 8$. Overall, the performances improve as $w$ increases, and our ASW consistently outperforms WinStega across different $w$. The raw data are reported in Table~\ref{table: Effect of w} in the Appendix.

\subsection{Effect of Model and Data Variants}
We studied the impact of the model scale and dataset choice, to validate the generalizability of our ASW further. 
In addition to Qwen2.5-7B-Instruct, the model adopted in Section~\ref{sec: Main Experiments}, Qwen2.5-3B-Instruct and Qwen2.5-14B-Instruct were examined. 
Beyond the InstructionWild dataset we performed evaluations on two out-domain datasets, databricks-dolly-15k~\cite{DatabricksBlog2023DollyV2} and Super-NaturalInstructions~\cite{wang-etal-2022-super}, to validate the \textit{generalizability} of ASW. 
Details are provided in Appendix~\ref{app: Effect of Model and Data Variants}.
Our experiments revealed a desirable trend that \textit{ASW becomes increasingly beneficial as the scale of the adopted language model increases}.


\section{Conclusion}
In this paper, we focus on both imperceptibility and robustness in linguistic steganography. 
We introduce the ASW framework, in which the context window is composed of the prompt, the bridge context, and the latest tokens. 
The bridge context is first instantiated as hard tokens and is extended to a tunable soft bridge context optimized via self-distillation, which mitigates the gap between full-context and ASW-based inference.
Empirical results demonstrate that ASW consistently outperforms the baseline method in text quality, imperceptibility, and robustness, across diverse hyperparameters, models, and datasets.


\section*{Limitations}
Our ASW framework with a soft bridge context trained using forward KL divergence achieves a noticeably higher embedding capacity than the other ASW variants, while showing no compromise in performance on the other metrics. The reason behind this phenomenon needs further exploration.

Due to limitations in computational resources and time, the experiments on hyperparameter variants, model variants, and dataset variants were conducted separately from the main experiments (Section~\ref{sec: Main Experiments}). Each variant was studied in isolation, rather than being combined with others, which may have overlooked some potential interaction effects.

\section*{Ethical Considerations}
Steganography can be used for privacy preservation, but it also carries risks of misuse for  malicious covert communication. Our work is intended solely for academic research, with experiments conducted on public datasets. We highlight that it should be used only in lawful and ethical contexts.

\section*{Acknowledgments}
We express our gratitude to the anonymous reviewers for their valuable and insightful comments.
This work was supported by JST SPRING, Grant Number JPMJSP2110.

\bibliography{custom}

\appendix

\section{Imperceptibility of LM-based Steganography}
\label{sec: imperceptibility_GLS}

Following the previous formulation~\cite{dai-cai-2019-towards, shen-etal-2020-near}, statistical imperceptibility refers to the similarity between the true language model $\text{LM}^t$ in the monitored channel and $\text{LM}^s$ which is the language model $\text{LM}$ integrated with steganographic algorithms. Specifically, the total variation distance (TVD) is used to measure statistical imperceptibility. Consider the TVD between $\text{LM}^t$ and $\text{LM}^s$, i.e. $d(\text{LM}^t, \text{LM}^s)$, by triangle inequality: $d(\text{LM}^t, \text{LM}^s) \leq d(\text{LM}^t, \text{LM}) + d(\text{LM}, \text{LM}^s)$.
As $d(\text{LM}^t, \text{LM})$ is a criterion to measure the original language model, which is limited by the research on language models. Thus, $d(\text{LM}, \text{LM}^s)$ is the main focus of linguistic steganography.

According to Pinsker’s inequality~\cite{1201071} and additivity of KL divergence, $d(\text{LM}, \text{LM}^s)$ can be further decomposed in each step, that is:\footnote{Some derivation is skipped, as details are verified in~\citet{dai-cai-2019-towards, shen-etal-2020-near, 1201071}.}
\begin{equation}\label{eq: KLD}
    d(\text{LM}, \text{LM}^s) \leq \sqrt{\frac{\ln{2}}{2}\sum_{t}D_{\text{KL}}(\boldsymbol{p}^{(t)}||\boldsymbol{\hat{p}}^{(t)})}
\end{equation}
where $\boldsymbol{p}^{(t)}$ is the original probability distribution at $t^{th}$ step, and $\boldsymbol{\hat{p}}^{(t)}$ is transformed from $\boldsymbol{p}^{(t)}$ via sampling and encoding.
Hence,  linguistic steganography could aim to minimize $D_{\text{KL}}(\boldsymbol{p}^{(t)}||\boldsymbol{\hat{p}}^{(t)})$, in order to obtain relative near-imperceptibility.

\begin{table*}[!t]
\renewcommand{\arraystretch}{1.0}
\centering
\scalebox{0.68}{
\begin{tabular}{l|l|cccc|c|cc}
\toprule[1pt]
\multirow{2}{*}{\textbf{Method}} & \multirow{2}{*}{\textbf{Bridge context}} & \multicolumn{4}{c|}{\textbf{Text quality}} & \textbf{Imperceptibility}      & \multicolumn{2}{c}{\textbf{Efficiency}} \\ \cline{3-9}
                        &                                    & $\Delta$PPL$\downarrow$  & BLEU$\uparrow$  & ROUGE-L$\uparrow$ & BERTScore$\uparrow$ & Steganalysis ACC$\downarrow$ & Capacity$\uparrow$     & Time (s)$\downarrow$     \\ \hline \rowcolor{blue!8} 
\multicolumn{9}{c}{Qwen2.5-3B-Instruct} \\  \hline \rowcolor{gray!20} 
\textbf{Full context}            & N/A    &  13.316   &   0.345   &  0.213   &   0.839       &    0.645          &    1.604  & 55.951   \\  \hline 
\textbf{WinStega}                & N/A                                                         &  25.743    & 0.083   &  0.099   &   0.701       &         0.945            &     2.588   &    13.121                      \\ \hline
 \textbf{ASW}   & 0 token          &  16.369  &  0.216        &  0.184    &     0.807      &       0.885       &  1.431      &     14.367                \\ \hline
             \multirow{2}{*}{\textbf{ASW (hard)}}           & \texttt{``[CONTEXT TRUNCATED]\textbackslash n''}                             &   13.488   &   0.220    &   0.184    &     0.808     &          0.855          &    1.710   &  14.831            \\
                        & \texttt{``...\textbackslash n''}                         &  12.693  &  0.231   &  0.186   &  0.812     &                    0.850   &     1.601    &   14.528                  \\ \cline{1-9}
\multirow{3}{*}{\textbf{ASW (soft)}}   & Random untrained  $\theta_{\text{bridge}}$         &  12.516   &  0.209   &  0.183   &    0.804     &    0.900    &    1.348     &     15.286             \\
                        & $\theta_{\text{bridge}}$ trained with forward KL        &  13.046   &  0.253   &  0.190   &    0.820     &          0.785     &     1.984   &   15.360                      \\
                        & $\theta_{\text{bridge}}$ trained with reverse KL                      &  12.337   &  0.251   &  0.191   &  0.818       &    0.795  &    1.635     & 14.998  \\    \hline \rowcolor{blue!8} 
\multicolumn{9}{c}{Qwen2.5-7B-Instruct} \\  \hline \rowcolor{gray!20} 
\textbf{Full context}            & N/A    &  1.090   &  0.405     &  0.257     &    0.856       &       0.560                &   1.024      &   115.43   \\  \hline 
\textbf{WinStega}                & N/A                                &  31.486    &  0.082  &    0.098   &    0.699       &       0.955               &  1.995       &       22.250               \\ \hline
 \textbf{ASW}   & 0 token                            &   5.157   &   0.220    &  0.190     &     0.806      &        0.885               &    0.966     &      29.702                \\ \hline
          \multirow{2}{*}{\textbf{ASW (hard)}}               & \texttt{``[CONTEXT TRUNCATED]\textbackslash n''}                             &  3.246    &   0.242    &   0.195    &    0.816       &    0.830                   &   1.012      &   30.868                   \\
                        & \texttt{``...\textbackslash n''}                             &  4.560    &   0.248    &   0.195    &     0.816      &         0.815              &     1.035    &       30.505               \\ \cline{1-9}
\multirow{3}{*}{\textbf{ASW (soft)}}   & Random untrained  $\theta_{\text{bridge}}$                           &  4.066    &   0.200    &   0.185    &    0.797       &    0.910      &     0.980    &    31.819                  \\
                        & $\theta_{\text{bridge}}$ trained with forward KL                       & 0.201    & 0.276    &  0.200     & 0.828           &     0.745                  &   1.606      &     32.594                 \\
                        & $\theta_{\text{bridge}}$ trained with reverse KL                      &  4.336    &   0.267    &    0.200    &    0.825       &       0.770                &    1.124     & 31.499   \\ \hline \rowcolor{blue!8} 
\multicolumn{9}{c}{Qwen2.5-14B-Instruct} \\  \hline \rowcolor{gray!20} 
\textbf{Full context}            & N/A    &  0.612   &  0.389    &  0.257   &   0.853       &        0.590      &   1.135   & 244.337  \\  \hline 
\textbf{WinStega}                & N/A        &   39.483  &  0.082  &  0.099   &   0.698       &          0.930           &    2.540    &     51.911                \\ \hline
 \textbf{ASW} & 0 token        &  4.022    &    0.225   &  0.193    &    0.811       &      0.900        &     1.073   &       71.789              \\ \hline
          \multirow{2}{*}{\textbf{ASW (hard)}}                & \texttt{``[CONTEXT TRUNCATED]\textbackslash n''}                             &   0.871   &   0.284    &   0.198    &      0.830    &           0.740         &     1.529  &   73.446           \\
                        & \texttt{``...\textbackslash n''}          &  3.645   &   0.239  &  0.195   &   0.816      &      0.820                 &   1.095      &     72.203                \\ \cline{1-9}
\multirow{3}{*}{\textbf{ASW (soft)}}   & Random untrained  $\theta_{\text{bridge}}$         &  3.948   &  0.209   &  0.189   &   0.801      &          0.915             &   1.029      &  74.604                \\
                        & $\theta_{\text{bridge}}$ trained with forward KL      &  0.761   &  0.288   &  0.202   &   0.833      &   0.725                    &     1.536    &    74.986              \\
                        & $\theta_{\text{bridge}}$ trained with reverse KL      &  1.903   &  0.286   &   0.203  &    0.831     &                     0.730  &    1.198     & 74.317  \\  \bottomrule[1pt]                  
\end{tabular}}
 \caption{Average results across various metrics under different language models and different methods, where the setting is $l_{\text{bridge}} = 8$, and $w = 10$.}
 \label{table: Effect of model}
\end{table*}

\begin{figure*}[!t]
 \centering
 \includegraphics[width=\textwidth]{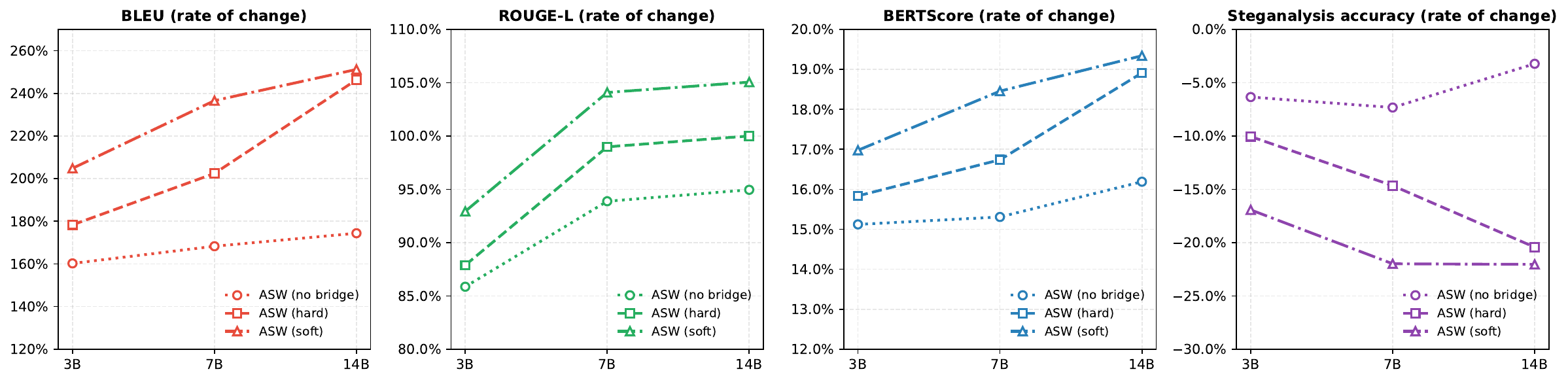} 
 \caption{Compared to WinStega, rates of change of various metrics when the scale of the language model varies.}
 \label{fig: Scale_extent}
\end{figure*}

\begin{table*}[!t]
\renewcommand{\arraystretch}{1.0}
\centering
\scalebox{0.68}{
\begin{tabular}{l|l|cccc|c|cc}
\toprule[1pt]
\multirow{2}{*}{\textbf{Method}} & \multirow{2}{*}{\textbf{Bridge context}} & \multicolumn{4}{c|}{\textbf{Text quality}} & \textbf{Imperceptibility}      & \multicolumn{2}{c}{\textbf{Efficiency}} \\ \cline{3-9}
                        &                                    & $\Delta$PPL$\downarrow$  & BLEU$\uparrow$  & ROUGE-L$\uparrow$ & BERTScore$\uparrow$ & Steganalysis ACC$\downarrow$ & Capacity$\uparrow$     & Time (s)$\downarrow$     \\     \hline \rowcolor{blue!8} 
\multicolumn{9}{c}{InstructionWild~\cite{instructionwild}} \\  \hline \rowcolor{gray!20} 
\textbf{Full context}            & N/A    &  1.090   &  0.405     &  0.257     &    0.856       &       0.560                &   1.024      &   115.43   \\  \hline 
\textbf{WinStega}                & N/A                                &  31.486    &  0.082  &    0.098   &    0.699       &       0.955               &  1.995       &       22.250               \\ \hline
  \textbf{ASW} & 0 token                            &   5.157   &   0.220    &  0.190     &     0.806      &        0.885               &    0.966     &      29.702                \\ \hline
        \multirow{2}{*}{\textbf{ASW (hard)}}                 & \texttt{``[CONTEXT TRUNCATED]\textbackslash n''}                             &  3.246    &   0.242    &   0.195    &    0.816       &    0.830                   &   1.012      &   30.868                   \\
                        & \texttt{``...\textbackslash n''}                             &  4.560    &   0.248    &   0.195    &     0.816      &         0.815              &     1.035    &       30.505               \\ \cline{1-9}
\multirow{3}{*}{\textbf{ASW (soft)}}   & Random untrained  $\theta_{\text{bridge}}$                           &  4.066    &   0.200    &   0.185    &    0.797       &    0.910      &     0.980    &    31.819                  \\
                        & $\theta_{\text{bridge}}$ trained with forward KL                       & 0.201    & 0.276    &  0.200     & 0.828           &     0.745                  &   1.606      &     32.594                 \\
                        & $\theta_{\text{bridge}}$ trained with reverse KL                      &  4.336    &   0.267    &    0.200    &    0.825       &       0.770                &    1.124     & 31.499   \\ \hline \rowcolor{blue!8} 
\multicolumn{9}{c}{databricks-dolly-15k~\cite{DatabricksBlog2023DollyV2}} \\  \hline \rowcolor{gray!20} 
\textbf{Full context}            & N/A    &  0.435   &   0.392   &  0.340   &     0.851     &  0.600            &   0.960   &  50.517 \\  \hline 
\textbf{WinStega}                & N/A                                  &  35.162    &  0.064     &   0.097    &    0.661      &        0.960            &   2.996    &  22.549               \\ \hline
  \textbf{ASW}  & 0 token        &   5.332   &   0.197    &  0.214    &   0.815        &     0.895         &   0.892     &    28.405                                     \\  \hline
          \multirow{2}{*}{\textbf{ASW (hard)}}              & \texttt{``[CONTEXT TRUNCATED]\textbackslash n''}                             &   5.165   &    0.214   &   0.226    &    0.823      &         0.810           &   0.986    &   28.872           \\
                        & \texttt{``...\textbackslash n''}                             &  5.244   &  0.208   &   0.219  &    0.817     &      0.865        &       0.933  &   27.874                  \\ \cline{1-9}
\multirow{3}{*}{\textbf{ASW (soft)}}   & Random untrained  $\theta_{\text{bridge}}$                           &  6.956   &  0.173   &   0.201  &  0.798       &          0.905             &      0.864   &   30.944               \\
                        & $\theta_{\text{bridge}}$ trained with forward KL                       &   1.085  &   0.217  &  0.218   &    0.820     &        0.825               &  1.534       &     29.921             \\
                        & $\theta_{\text{bridge}}$ trained with reverse KL                      & 5.084    &  0.214   &  0.225   &    0.825     &     0.805       &     1.047    & 29.941   \\ \hline \rowcolor{blue!8} 
\multicolumn{9}{c}{Super-NaturalInstructions~\cite{wang-etal-2022-super}} \\  \hline \rowcolor{gray!20} 
\textbf{Full context}            & N/A    &  0.669   &   0.333   &  0.342   &     0.839     &           0.720   &   0.819   & 13.790  \\  \hline 
\textbf{WinStega}                & N/A        &   22.171   &  0.024  &  0.077   &    0.644      &            0.985         &   3.003     &   22.519                  \\ \hline
 \textbf{ASW}  & 0 token                            &   11.507   &    0.105   &   0.162   &    0.786       &       0.925       &   0.980     &        32.340             \\ \hline
    \multirow{2}{*}{\textbf{ASW (hard)}}                     & \texttt{``[CONTEXT TRUNCATED]\textbackslash n''}                             &   12.622   &   0.107    &    0.164   &     0.792     &   0.880     &   1.030    &    37.687          \\
                        & \texttt{``...\textbackslash n''}         &   12.117  &  0.107   &  0.163   &    0.788     &            0.875           &   0.995      &     32.654                \\ \cline{1-9}
\multirow{3}{*}{\textbf{ASW (soft)}}   & Random untrained  $\theta_{\text{bridge}}$                           &  12.230   &  0.106   &  0.162   &     0.784    &          0.930             &    1.030     &   40.115               \\
                        & $\theta_{\text{bridge}}$ trained with forward KL                       &   5.786  &  0.109   &  0.165   &   0.789      &    0.860     &   1.750      &    38.240              \\
                        & $\theta_{\text{bridge}}$ trained with reverse KL          &  10.820   & 0.109    &  0.167   &    0.790     &          0.855             &     1.211    &  38.652  \\  \bottomrule[1pt]                  
\end{tabular}}
 \caption{Average results across various metrics under different datasets and different methods, where the setting is Qwen2.5-7B-Instruct, $l_{\text{bridge}} = 8$, and $w = 10$.}
 \label{table: Effect of dataset}
\end{table*}

\section{Analysis of Robustness in ASW Framework}
\label{appendix:Analysis of Robustness}

In this section, we analyze the robustness of the proposed ASW framework. Specifically, we focus on Bob’s extraction process based on autoregressive inference. To successfully extract a stegotext of length $T$, it is essential to evaluate the inference robustness under adversarial modifications.

\begin{proposition}\label{proposition: robustness against m-token modification}
If $m$ out of $T$ tokens are randomly modified, and a language model uses a left-context window of size $w$, the expected number $E$ of the positions where the autoregressive inference output is influenced is $T-1 - \frac{1}{\binom{T}{m}}\sum_{i=1}^{\min(w,T-1)}\binom{T-i}{m} -  \frac{1}{\binom{T}{m}} \max(0,T-1-w)\binom{T-w}{m}$.
\end{proposition}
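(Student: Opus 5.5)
We prove the formula by linearity of expectation over indicator variables, one per inference position. Index the stegotext tokens $s^{(0)},\dots,s^{(T-1)}$. Under the ASW window of Section~\ref{sec: Anchored Sliding Window (ASW)}, the inference that predicts $s^{(t)}$ depends only on the prompt, the bridge context (neither can be attacked, and the bridge is independent of the excluded tokens) and the latest tokens $s^{(\max(0,t-w))},\dots,s^{(t-1)}$. Positions $t=1,\dots,T-1$ are the ones whose inference depends on stegotext tokens. Position $t=0$ conditions only on the prompt and bridge, so it is never influenced. This explains the leading term $T-1$.

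Define the indicator $X_t=1$ if at least one of the $\min(w,t)$ tokens in the window of position $t$ is among the $m$ modified tokens. Then $E=\sum_{t=1}^{T-1}\Pr[X_t=1]$. The modified set is a uniformly random $m$-subset of the $T$ positions. The window of position $t$ is unaffected exactly when all $m$ modified positions avoid its $k=\min(w,t)$ window tokens. This happens with probability $\binom{T-k}{m}/\binom{T}{m}$, with the convention $\binom{a}{m}=0$ for $a<m$. Hence
\[
\Pr[X_t=1]=1-\frac{\binom{T-\min(w,t)}{m}}{\binom{T}{m}}.
\]

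Summing gives $E=T-1-\frac{1}{\binom{T}{m}}\sum_{t=1}^{T-1}\binom{T-\min(w,t)}{m}$. Split the sum according to whether $t\le w$:
\begin{itemize}
\item For $1\le t\le \min(w,T-1)$, the window size is $t$, which contributes $\sum_{i=1}^{\min(w,T-1)}\binom{T-i}{m}$.
\item For $w<t\le T-1$, the window is full. There are $\max(0,T-1-w)$ such positions, each contributing $\binom{T-w}{m}$.
\end{itemize}
Substituting these two pieces yields exactly the stated expression.

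The only delicate step is the modelling assumption: an inference output counts as influenced if and only if some modified token lies inside its latest-token window. The "if" direction we take as the definition of influence, as in Table~\ref{table: robustness}. The "only if" direction needs the independence of the bridge context and the immunity of the prompt. These are precisely the design principles of ASW. We also assume the modifications are substitutions, so positions are preserved, and that the $m$ positions are drawn uniformly without replacement. We state both assumptions explicitly before the computation. Everything after that is routine counting.
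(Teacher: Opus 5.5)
Your proposal is correct and follows essentially the same route as the paper: indicator variables for positions $1,\dots,T-1$, linearity of expectation, the avoidance probability $\binom{T-\min(w,t)}{m}/\binom{T}{m}$, and the split of the sum into $1\le t\le\min(w,T-1)$ versus $w<t\le T-1$, which accounts for the $\max(0,T-1-w)$ term. Your explicit modelling assumptions are left implicit in the paper: substitutions only, a uniform $m$-subset, and an unattackable prompt together with a bridge context that does not depend on the excluded tokens.
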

\begin{proof}
There are $T$ tokens in the sequence, indexed from $0$ to $T-1$. A random subset of $m$ tokens is modified, chosen uniformly from the $\binom{T}{m}$. The model generates tokens autoregressively. For position $i$ (where $0 \leq i \leq T-1$), the output depends on a left-context window of size at most $w$, specifically, the tokens from $\max(0, i-w)$ to $i-1$. The output at position $i$ is influenced if at least one token in its left-context window is modified. Position $0$ has no left context, so it is never influenced. Thus, we consider positions $i =1$ to $T -1$.

Define the indicator random variable $X_i$ for $i = 1, \dots, T-1$:
\begin{equation}
X_i = 
\begin{cases}
  1, & \text{if the output at position $i$ is influenced} \\
  0, & \text{otherwise }
\end{cases}.
\end{equation}
The total number of influenced positions is $\sum_{i=1}^{T-1}X_i$. By linearity of expectation, the expected number of influenced positions is: $E = \sum_{i=1}^{T-1}\text{Pr}(X_i = 1)$.

The output at position $i$ is influenced if at least one token in its left-context window is modified. The context window for position $i$ is:
\begin{equation}
    C_i = \{j: \max(0,i-w)\leq j \leq i-1\}
\end{equation}
where the size of $C_i$ is $d_i = |C_i| = \min(i,w)$.

The event $X_i = 0$ occurs only if no token in $C_i$ is modified. The number of tokens outside $C_i$ is $T-d_i$. The probability that none of the $m$ modified tokens are in $C_i$ is:
\begin{equation}
    \text{Pr}(\text{no modified token in $C_i$}) = \frac{\binom{T-d_i}{m}}{\binom{T}{m}}
\end{equation}
provided $\binom{T-d_i}{m} = 0$ if $m > T- d_i$ (by convention for binomial coefficients). Thus:
\begin{equation}
    \text{Pr}(X_i = 1) = 1 - \frac{\binom{T-d_i}{m}}{\binom{T}{m}}.
\end{equation}

Substitute $Pr(X_i = 1)$ into the expectation:
\begin{align}
    & E = \sum_{i=1}^{T-1}\text{Pr}(X_i = 1) \notag \\
    = & \sum_{i=1}^{T-1}\left(1 - \frac{\binom{T-d_i}{m}}{\binom{T}{m}}\right)  \notag  \\
    = & T-1 - \frac{1}{\binom{T}{m}}\sum_{i=1}^{T-1}\binom{T-d_i}{m}.
\end{align}
\begin{itemize}
    \item For $i\leq w$ (i.e., $d_i = i$), the indices are $i = 1$ to $v$, where $v = \min(w,T-1)$.
    \item For $i > w$ (i.e., $d_i = w$), the indices are $i = v + 1$ to $T - 1$, but only if $w < T -1$ (otherwise, this range is empty).
\end{itemize}

Thus:
\begin{align}
    & \sum_{i=1}^{T-1}\binom{T-d_i}{m} \notag \\ = & \sum_{i=1}^{v}\binom{T-i}{m} + \sum_{i=v + 1}^{T-1}\binom{T-w}{m} \notag  \\ = &  \sum_{i=1}^{v}\binom{T-i}{m} + (T-1-v)\binom{T-w}{m}.
\end{align}

Since $v = \min(w, T - 1)$, we have:
\begin{align}
    T-1-v = &
    \begin{cases}
        T-1-w, & \text{if} \ w< T -1, \\
        0, & \text{if} \ w \geq T-1,
    \end{cases} \notag \\
     & = \max(0,T-1-w)
     .
\end{align}

Thus:
\begin{align}
    & \sum_{i=1}^{T-1}\binom{T-d_i}{m} \notag =  \sum_{i=1}^{v}\binom{T-i}{m} + \\ & \max(0,T-1-w)\binom{T-w}{m}.
\end{align}

Substitute back into $E = \sum_{i=1}^{T-1}\text{Pr}(X_i = 1)$:
\begin{align}
    &E =  \sum_{i=1}^{T-1}\text{Pr}(X_i = 1) \notag \\ = & T-1 - \frac{1}{\binom{T}{m}}\sum_{i=1}^{\min(w,T-1)}\binom{T-i}{m} \notag \\ - & \frac{1}{\binom{T}{m}} \max(0,T-1-w)\binom{T-w}{m}.
\end{align}

\end{proof}

\begin{proposition}\label{proposition: E(w)}
$E$ is non-decreasing when $w$ increases from $1$ to $T-1$.
\end{proposition}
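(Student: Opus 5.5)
Rather than manipulating the closed form of Proposition~\ref{proposition: robustness against m-token modification} directly, I will go back to the expression obtained in its proof,
\[
E(w) = \sum_{i=1}^{T-1}\left(1 - \frac{\binom{T-d_i(w)}{m}}{\binom{T}{m}}\right), \qquad d_i(w) = \min(i,w),
\]
and argue termwise. The plan is to show that for each fixed position $i \in \{1,\dots,T-1\}$ the summand $\Pr(X_i = 1)$ is non-decreasing in $w$. Summing non-decreasing functions then gives a non-decreasing function, so $E(w+1) \geq E(w)$ for every $1 \le w \le T-2$.

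The key steps, in order, are as follows.

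\textbf{Step 1 (window growth).} For fixed $i$, $d_i(w) = \min(i,w)$ is non-decreasing in $w$. Going from $w$ to $w+1$, it either stays the same (when $i \le w$) or increases by one (when $i > w$). Equivalently, the window sets are nested, $C_i(w) \subseteq C_i(w+1)$.

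\textbf{Step 2 (monotonicity of the survival probability).} The quantity $\binom{T-d}{m}$ is non-increasing in $d$ for $0 \le d \le T$, with the convention $\binom{n}{m}=0$ when $n<m$. The cleanest justification is probabilistic rather than via Pascal's rule. The event ``no modified token lies in $C_i(w+1)$'' is contained in the event ``no modified token lies in $C_i(w)$'' because the windows are nested. Hence $\Pr(X_i=0)$ can only decrease as $w$ grows, and $\Pr(X_i=1)$ can only increase.

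\textbf{Step 3 (summation).} Summing over $i$ gives $E(w+1)-E(w) = \frac{1}{\binom{T}{m}}\sum_{i=w+1}^{T-1}\left[\binom{T-w}{m}-\binom{T-w-1}{m}\right]$. This is $\frac{(T-1-w)}{\binom{T}{m}}\binom{T-w-1}{m-1} \ge 0$, which one can state explicitly as a sanity check against the closed form.

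The only potential obstacle is reconciling this argument with the closed-form expression. In that expression the $\max(0,T-1-w)$ term and the upper limit $\min(w,T-1)$ change form as $w$ varies, and the binomial conventions for $m > T-d$ must be handled. Working from the termwise representation, as above, avoids this case analysis entirely. So I expect no real difficulty beyond carefully stating the convention $\binom{n}{m}=0$ for $n<m$ and checking the boundary case $w = T-1$, where all $d_i = i$ and further increases of $w$ change nothing.
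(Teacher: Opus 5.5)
Your proof is correct, and its main argument takes a different route from the paper's. The paper works with the closed form from Proposition~\ref{proposition: robustness against m-token modification}. It sets $S(w)=\sum_{i=1}^{w}\binom{T-i}{m}+(T-1-w)\binom{T-w}{m}$ and computes the finite difference $S(w+1)-S(w)=-(T-w-1)\binom{T-w-1}{m-1}$ via Pascal's rule. From this it reads off $E(w+1)-E(w)=(T-w-1)\binom{T-w-1}{m-1}/\binom{T}{m}\ge 0$.

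You instead prove monotonicity termwise. The windows are nested, $C_i(w)\subseteq C_i(w+1)$, so the event that position $i$ is unaffected can only shrink as $w$ grows. This needs no binomial identities and no case analysis of the $\min$ and $\max$ in the closed form. It also proves more. It never uses that the modified set is uniformly random, and it holds realization by realization: for every fixed modified set, the influenced positions under $w$ are a subset of those under $w+1$. So monotonicity holds for any distribution of modifications, and even pathwise.

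What the paper's computation buys is the exact increment, and hence the equality condition ($m=0$ or $m>T-w$). Your Step~3 recovers the same increment by a termwise count. Only the $T-1-w$ positions $i\ge w+1$ change, each by $\binom{T-w-1}{m-1}/\binom{T}{m}$. So your write-up subsumes the paper's argument.
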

\begin{proof}

Considering $1 \leq w \leq T-1$, $E$ is redefined as $E(w)$:
\begin{align}
    &E(w) =  \sum_{i=1}^{T-1}\text{Pr}(X_i = 1) \notag \\ = & T-1 - \frac{1}{\binom{T}{m}}\sum_{i=1}^{w}\binom{T-i}{m} \notag \\ - & \frac{1}{\binom{T}{m}} (T-1-w)\binom{T-w}{m}
\end{align}
where $T$ is the total number of tokens, $m$ is the number of modified tokens $(0 \leq m \leq T)$, and $w$ is the left-context window size. This expression holds for $w \in [1,T-1]$ because $v = \min(w,T-1) = w$ and $\max(0,T-1-w) = T -1 -w$ when $w \leq T -1$.

Considering $w \in \{1,2,\dots, T-2 \}$, define:
\begin{equation}
    S(w) = \sum_{i=1}^{w}\binom{T-i}{m} +  (T-1-w)\binom{T-w}{m}.
\end{equation}
Then:
\begin{equation}
    E(w) = T - 1 - \frac{S(w)}{\binom{T}{m}}.
\end{equation}
\begin{equation}
    S(w+1) - S(w) = -(T-w-1)\binom{T-w-1}{m-1}.
\end{equation}
\begin{align}
     E(w+1) - E(w) \notag  =  & - \frac{S(w+1) - S(w)}{\binom{T}{m}} \notag \\ = & \frac{(T-w-1)\binom{T-w-1}{m-1}}{\binom{T}{m}}.
\end{align}
Thus, $E(w+1) - E(w) \geq 0$ when $w \in {1,2,\dots,T-2}$ with \textbf{equality if and only if} $\binom{T-w-1}{m-1} = 0$. This occurs when:
\begin{itemize}
    \item $m-1<0$ (i.e., $m=0$), or
    \item $m-1>T-w-1$ (i.e., $m>T-w$).
\end{itemize}

\end{proof}

\section{Steganalysis Settings}
\label{sec: steganalysis}

The binary discriminator used for steganalysis is a fine-tuned version of the pretrained BERT~\cite{devlin-etal-2019-bert}. 
For the dataset, in each experimental group, we use 500 pairs of a reference text (via multinomial sampling) and a stegotext. In each group, the 1000 texts are split in a 6:2:2 ratio to create the training, validation, and test sets.
For fine-tuning, we use AdamW~\cite{loshchilov2018decoupled} as the optimizer with a learning rate of $1 \times 10^{-5}$. The batch size is set to 128, and the discriminator is trained for 5 epochs.
Experiments were implemented in Python 3.12.7 with Torch 2.5.0, and accelerated by using RTX 6000 Ada Generation GPUs.

\section{Effect of Model and Data Variants}
\label{app: Effect of Model and Data Variants}
Table~\ref{table: Effect of model} presents the average results across multiple metrics under different methods. The experiments were conducted using Qwen2.5-3B-Instruct, Qwen2.5-7B-Instruct, and Qwen2.5-14B-Instruct, with settings identical to those in the main experiments (Section~\ref{sec: Main Experiments}). From this table, we can draw several key observations:

\textit{1)} Across all three language models, ASW with a trained soft bridge context achieves the best performance, followed by ASW with a hard bridge context, then ASW without a bridge context, and finally WinStega. Moreover, ASW consistently and substantially outperforms WinStega across these language models.

\textit{2)} ASW becomes increasingly beneficial as the scale of the adopted language model grows. Specifically, compared with WinStega, we examine the extent to which ASW improves performance in BLEU, ROUGE-L, BERTScore, and steganalysis accuracy. We define the rate of change as $\frac{x_{\text{ASW}} - x_{\text{WinStega}}}{x_{\text{WinStega}}}$, where $x_{\text{ASW}}$ denotes the value of a given metric under ASW (any variant), and $x_{\text{WinStega}}$ denotes the corresponding value under WinStega.
Figure~\ref{fig: Scale_extent} illustrates the trajectories of these rates of change, showing that ASW exhibits greater potential with larger models.
In this figure, for ASW (hard) and ASW (soft), only the best value is plotted for each group.
One possible explanation is that larger language models have a deeper understanding of the bridge context, enabling them to better \textit{imagine the excluded content}.

Table~\ref{table: Effect of dataset} lists the average results when the evaluation dataset (500 samples) varies, from InstructionWild~\cite{instructionwild} to databricks-dolly-15k (open QA)~\cite{DatabricksBlog2023DollyV2} and Super-NaturalInstructions (purely questions)~\cite{wang-etal-2022-super}.
By comparison, under these three evaluation datasets, all variants of ASW consistently outperform the baseline, WinStega.
In addition, since the question (prompt) datasets differ, the number of tokens generated in response also varies, leading to differences in runtime, particularly when the inference is based on the full context.

To further validate the generalizability of ASW beyond the Qwen model family, we conduct additional experiments on \texttt{Llama-3.1-8B-Instruct}, a representative model from the LLaMA series with a substantially different architecture and training corpus. Following the same experimental protocol as in Section~\ref{sec: Intuition: Hard Bridge Context}, we evaluate the average KL divergence at each step $t$ under different context window sizes $w \in \{10, 20, 30, 40, 50\}$, using 500 sampled instances per setting. The results are summarized in Table~\ref{tab:cross-arch-llama}.

\begin{table*}[!t]
\renewcommand{\arraystretch}{1.0}
\centering
\scalebox{0.88}{
\begin{tabular}{l|l|ccccc}
\toprule[1.0pt]
 \textbf{Context window}  & \textbf{Hard bridge context}   & $w = 10$ & $w = 20$ & $w = 30$ & $w = 40$ & $w = 50$ \\ 
\midrule[1.0pt]
\textbf{Basic} (Figure~\ref{fig: WinStega_context}) & N/A  & \textcolor{red!75!black}{\textbf{2.134}}  & \textcolor{red!75!black}{\textbf{1.477}} &  \textcolor{red!75!black}{\textbf{1.291}}  &  \textcolor{red!75!black}{\textbf{1.051}}  &  \textcolor{red!75!black}{\textbf{0.875}}  \\ \hline
\multirow{7}{*}{\textbf{ASW} (Figure~\ref{fig: ASW_context})} & 0 token  & 1.635  & 1.034   & 0.855   & 0.634   & 0.572   \\ \cline{2-7}
 & Random 5 tokens  & 1.671 &  1.157  &  0.933  & 0.659   &  0.544  \\
 & Random 10 tokens  & 1.889 &  1.248  &  1.020  &  0.757  &  0.624   \\ \cline{2-7}
 & \texttt{``[Some texts are missing here]\textbackslash n''}  & \textcolor{green!50!black}{\textbf{1.387}}  & \textcolor{green!50!black}{\textbf{0.960}}   &  \textcolor{green!50!black}{\textbf{0.762}}  &  \textcolor{green!50!black}{\textbf{0.597}}  & 0.499   \\
 & \texttt{``[previous message removed]\textbackslash n''} & 1.427  & 0.969   &  0.798  & 0.613   & \textcolor{green!50!black}{\textbf{0.495}}   \\
 & \texttt{``[CONTEXT TRUNCATED]\textbackslash n''} & 1.421 &  0.969  &  0.795  &  0.617  & 0.500   \\
 & \texttt{``...\textbackslash n''} & 1.423 &  0.976  &  0.800  &  0.622  & 0.503  \\
\bottomrule[1.0pt]
\end{tabular}}
 \caption{Average KL divergence $D_{\text{KL}}(\boldsymbol{p}_{\text{full}}^{(t)}||\boldsymbol{p}_{\text{window}}^{(t)})$ at each step $t$ under different settings on \texttt{Llama-3.1-8B-Instruct} (lower is better). Each setting uses 500 sampled instances. \textcolor{red!75!black}{\textbf{Red}} values indicate the highest (the worst), and \textcolor{green!50!black}{\textbf{green}} values indicate the lowest (the best) in each column.}
 \label{tab:cross-arch-llama}
\end{table*}

As shown in Table~\ref{tab:cross-arch-llama}, the results on \texttt{Llama-3.1-8B-Instruct} are consistent with the key findings reported in Section~\ref{sec: Intuition: Hard Bridge Context}. First, \textbf{the choice of prompt matters}: ASW with well-designed hard bridge contexts consistently outperforms the basic context window across all window sizes, confirming that the adaptive sliding window mechanism generalizes effectively to non-Qwen architectures. Second, \textbf{the design of bridge context matters}: informative placeholders such as \texttt{``{[}Some texts are missing here{]}''} and \texttt{``{[}CONTEXT TRUNCATED{]}''} yield lower KL divergence than random token fillers or the zero-token baseline, indicating that semantically meaningful bridge contexts help the model better accommodate truncated histories regardless of the underlying architecture.

\begin{table}[!t]
\centering
\scalebox{0.85}{
\begin{tabular}{l|l|ccc}
\toprule[1pt]
                      \multicolumn{2}{c|}{Method}           & \multicolumn{1}{c}{$m = 1$} & \multicolumn{1}{c}{$m = 2$} & \multicolumn{1}{c}{$m = 3$} \\ \hline \rowcolor{gray!20} 
\multicolumn{2}{c|}{Full context} & 49.34\% & 34.22\% & 24.37\% \\ \hline
\multirow{2}{*}{$w = 10$} & WinStega &  96.29\%      &  92.82\%    &  89.85\%  \\
                        & ASW      & \textbf{97.91\%}      & \textbf{96.14\%}      & \textbf{94.03\%}         \\ \hline

\multirow{2}{*}{$w = 30$} & WinStega &            89.86\%            &        81.07\%                &      72.77\%               \\
& ASW      &    \textbf{94.23\%}   &   \textbf{88.89\%}    &   \textbf{83.65\%}      \\ \hline
\multirow{2}{*}{$w = 50$} & WinStega &            84.25\%               &     72.79\%                      &       58.04\%                    \\
                        & ASW      &     \textbf{90.01\%}        &      \textbf{82.43\%}   &   \textbf{74.70\%}    \\ \bottomrule[1pt]                   
\end{tabular}}
 \caption{The ratio of positions with unaffected inference when the number $m$ of deleted tokens and the latest token length $w$ vary. The deleted positions are also considered as affected.}
 \label{table: robustness_deletion}
\end{table}

\begin{table}[!t]
\centering
\scalebox{0.85}{
\begin{tabular}{l|l|ccc}
\toprule[1pt]
                      \multicolumn{2}{c|}{Method}           & \multicolumn{1}{c}{$m = 1$} & \multicolumn{1}{c}{$m = 2$} & \multicolumn{1}{c}{$m = 3$} \\ \hline \rowcolor{gray!20} 
\multicolumn{2}{c|}{Full context} & 49.68\% & 34.50\% & 24.51\% \\ \hline
\multirow{2}{*}{$w = 10$} & WinStega &  96.33\%      &  92.86\%    &  89.99\%  \\
                        & ASW      & \textbf{97.92\%}      & \textbf{96.24\%}      & \textbf{94.13\%}         \\ \hline

\multirow{2}{*}{$w = 30$} & WinStega &            89.82\%            &        80.95\%                &      72.92\%               \\
& ASW      &    \textbf{94.10\%}   &   \textbf{89.00\%}    &   \textbf{83.64\%}      \\ \hline
\multirow{2}{*}{$w = 50$} & WinStega &            84.42\%               &     72.77\%                      &       58.06\%                    \\
                        & ASW      &     \textbf{90.05\%}        &      \textbf{82.15\%}   &   \textbf{74.43\%}    \\ \bottomrule[1pt]                   
\end{tabular}}
 \caption{The ratio of positions with unaffected inference when the number $m$ of inserted tokens and the latest token length $w$ vary. The inserted new positions are also considered as affected.}
 \label{table: robustness_insertion}
\end{table}

\begin{algorithm}[!t]
\small
\caption{Steganographic embedding in ASW (with a hard bridge context)}\label{algorithm: Steganographic embedding in ASW (with hard bridge context)} 
\textbf{Input:}\\
Prompt, $\boldsymbol{s}_{\text{prompt}} = [s^{(-N_p)},\dots,s^{(-1)}]$ \\
Secret message, $m_s$  \\
Parameters of the language model, $\theta$ \\
Hard bridge context, $\boldsymbol{s}_{\text{bridge}}$ \\
Length of the latest tokens, $w$ \\
{\textbf{Output:}}\\
Steganographic text, $t_s$\\

\begin{algorithmic}[1]
\STATE $\boldsymbol{s}_{\text{generated}} \leftarrow []$;
\FOR{$t = 0,1,\dots$}
    \STATE $\boldsymbol{s}_{\text{latest}} \leftarrow \boldsymbol{s}_{\text{generated}}[-w:]$;
    \STATE $\boldsymbol{s}_{\text{window}} \leftarrow \boldsymbol{s}_{\text{prompt}} \Vert \boldsymbol{s}_{\text{bridge}} \Vert \boldsymbol{s}_{\text{latest}}$;
    \STATE $\boldsymbol{p}^{(t)}_{\text{window}} \leftarrow \text{softmax}(f_{\text{LM}}(\boldsymbol{s}_{\text{window}};\theta))$; 
    \STATE Use the steganographic embedding algorithm, $\boldsymbol{p}^{(t)}_{\text{window}}$,  and $m_s$ to generate the next token $s^{(t)}$;
    \STATE $\boldsymbol{s}_{\text{generated}} \leftarrow \boldsymbol{s}_{\text{generated}} \Vert s^{(t)}$;
\ENDFOR

\STATE Detokenize $\boldsymbol{s}_{\text{generated}}$ to $t_s$;
\RETURN $t_s$

\end{algorithmic}

\end{algorithm}

\begin{algorithm}[!t]
\small
\caption{Steganographic extraction in ASW (with a hard bridge context)}\label{algorithm: Steganographic extraction in ASW (with hard bridge context)} 
\textbf{Input:}\\
Prompt, $\boldsymbol{s}_{\text{prompt}} = [s^{(-N_p)},\dots,s^{(-1)}]$ \\
Steganographic text, $t_s$\\
Parameters of the language model, $\theta$ \\
Hard bridge context, $\boldsymbol{s}_{\text{bridge}}$ \\
Length of the latest tokens, $w$ \\
{\textbf{Output:}}\\
Secret message, $m_s^{\prime}$  \\

\begin{algorithmic}[1]
\STATE $m_s^{\prime} \leftarrow \emptyset$;
\STATE Tokenize $t_s$ to $\boldsymbol{s}_{\text{generation}} = [s^{(0)},s^{(1)},\dots]$;
\FOR{$t = 0,1,\dots$}
    \STATE $\boldsymbol{s}_{\text{latest}} \leftarrow \boldsymbol{s}_{\text{generated}}[t-w:t]$;
    \STATE $\boldsymbol{s}_{\text{window}} \leftarrow \boldsymbol{s}_{\text{prompt}} \Vert \boldsymbol{s}_{\text{bridge}} \Vert \boldsymbol{s}_{\text{latest}}$;
    \STATE $\boldsymbol{p}^{(t)}_{\text{window}} \leftarrow \text{softmax}(f_{\text{LM}}(\boldsymbol{s}_{\text{window}};\theta))$; 
    \STATE Use the steganographic extraction algorithm,  $\boldsymbol{p}^{(t)}_{\text{window}}$, and $s^{(t)}$ to update $m_s^{\prime}$;
\ENDFOR
\RETURN $m_s^{\prime}$

\end{algorithmic}

\end{algorithm}

\begin{figure*}[!t]
 \centering
 \includegraphics[width=\textwidth]{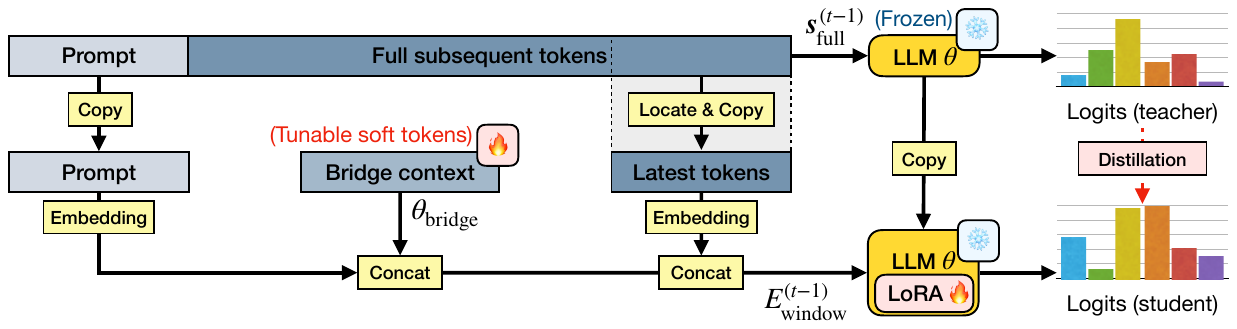} 
 \caption{An overview of the distillation framework based on LoRA. All the parameters of the language model are not tunable, and only the soft bridge context and the LoRA module are tunable. The distillation objective is to mitigate the gap between logits based on the full context $\boldsymbol{s}_{\text{full}}^{(t-1)}$ (teacher) and the logits based on a $E_{\text{window}}^{(t-1)}$ (student).}
 \label{fig: Self-disttilation_LoRA}
\end{figure*}

\begin{table*}[!t]
\renewcommand{\arraystretch}{1.0}
\centering
\scalebox{0.68}{
\begin{tabular}{l|l|cccc|c|cc}
\toprule[1pt]
\multirow{2}{*}{\textbf{Method}} & \multirow{2}{*}{\textbf{Bridge context}} & \multicolumn{4}{c|}{\textbf{Text quality}} & \textbf{Imperceptibility}      & \multicolumn{2}{c}{\textbf{Efficiency}} \\ \cline{3-9}
                        &                                    & $\Delta$PPL$\downarrow$  & BLEU$\uparrow$  & ROUGE-L$\uparrow$ & BERTScore$\uparrow$ & Steganalysis ACC$\downarrow$ & Capacity$\uparrow$     & Time (s)$\downarrow$     \\ \hline \rowcolor{gray!20} 
\textbf{Full context}            & N/A    &  1.090   &  0.405     &  0.257     &    0.856       &       0.560                &   1.024      &   115.43   \\  \hline
\textbf{WinStega}                & N/A                                &  31.486    &  0.082  &    0.098   &    0.699       &       0.955                &  1.995       &       22.250               \\ \hline \rowcolor{blue!8} 
\multicolumn{9}{c}{\textbf{w/o LoRA; w/o soft bridge context} (Trainable parameter size: 0)} \\  \hline
  \textbf{ASW}  & 0 token       &   5.157   &   0.220    &  0.190     &     0.806      &        0.885               &    0.966     &      29.702                \\ \hline
                 \multirow{2}{*}{\textbf{ASW (hard)}}       & \texttt{``[CONTEXT TRUNCATED]\textbackslash n''}                             &  3.246    &   0.242    &   0.195    &    0.816       &    0.830                   &   1.012      &   30.868                   \\
                        & \texttt{``...\textbackslash n''}                             &  4.560    &   0.248    &   0.195    &     0.816      &         0.815              &     1.035    &       30.505               \\ \hline \rowcolor{blue!8} 
\multicolumn{9}{c}{\textbf{w/o LoRA; w/ soft bridge context} (Trainable parameter size: 28,672)} \\  \hline
             \multirow{2}{*}{\textbf{ASW (soft)}}            & Trained with forward KL                       & 0.201    &   0.276    &  0.200     & 0.828           &   0.745                  &   1.606      &     32.594                 \\
                        & Trained with reverse KL                      &  4.336    &   0.267    &    0.200    &    0.825       &       0.770                &    1.124     & 31.499   \\ \hline \rowcolor{blue!8} 
\multicolumn{9}{c}{\textbf{w/ LoRA; w/o soft bridge context} (Trainable parameter size: 1,261,568)} \\ \hline
\multirow{2}{*}{\textbf{ASW (LoRA)}}            & Trained with forward KL                       &   0.312  &   0.281    &   0.193    &    0.827        &         0.750            &    1.631    &   29.238                  \\
                        & Trained with reverse KL      &   4.935   &  0.275    &   0.201     &  0.826        &     0.750     &    1.057    &  30.044   \\ \hline \rowcolor{blue!8} 
\multicolumn{9}{c}{\textbf{w/ LoRA; w/ soft bridge context} (Trainable parameter size: 1,290,240)} \\  \hline
\multirow{2}{*}{\begin{tabular}[c]{@{}l@{}}\textbf{ASW} \\ \textbf{(LoRA + soft)}\end{tabular}}            & Trained with forward KL           &   0.436    &   0.286    &   0.196    &       0.827     &    0.755       &   1.660     &      32.187               \\
                        & Trained with reverse KL      &   0.103   &   0.278   &     0.201   &    0.828      &     0.740      &   1.184     &  31.763  \\   \bottomrule[1pt]                  
\end{tabular}}
 \caption{Average results across various metrics under different methods (including the LoRA module), where the setting is Qwen2.5-7B-Instruct, $l_{\text{bridge}} = 8$, and $w = 10$.}
 \label{table: Result_lora}
\end{table*}

\section{Human Evaluation of Text Fluency}
\label{sec:human-eval}

While automated metrics such as BLEU and ROUGE-L provide useful quantitative assessments of text quality, they may not fully capture the perceived fluency and imperceptibility of steganographic text from a human perspective. To address this, we conduct a human evaluation study focusing on text fluency.

\paragraph{Setup.}
We randomly select 50 prompts and use \texttt{Qwen2.5-7B-Instruct} to generate four types of outputs for each prompt: (1)~covertext produced by random sampling, (2)~full-context stegotext, (3)~WinStega stegotext with $w = 30$, and (4)~ASW stegotext with the soft bridge context trained via forward KL and $w = 30$. Human evaluators are asked to assign a fluency score on a 5-point Likert scale (1~=~least fluent, 5~=~most fluent) without knowing the generation method behind each text. The average scores are summarized in Table~\ref{tab:human-eval}.

\begin{table}[t]
\renewcommand{\arraystretch}{1.0}
\centering
\scalebox{0.88}{
\begin{tabular}{l|c}
\toprule[1.0pt]
\textbf{Type} & \textbf{Fluency Score} \\
\midrule[1.0pt]
Covertext (random sampling) & 3.12 \\
Full-context stegotext & 3.04 \\
WinStega stegotext ($w = 30$) & 1.88 \\
Our ASW stegotext ($w = 30$) & \textbf{2.56} \\
\bottomrule[1.0pt]
\end{tabular}}
\caption{Human evaluation of text fluency (1--5 scale, higher is better). ASW stegotext substantially outperforms WinStega and approaches the fluency of full-context stegotext.}
\label{tab:human-eval}
\end{table}

\paragraph{Analysis.}
As shown in Table~\ref{tab:human-eval}, covertext and full-context stegotext receive comparable fluency scores (3.12 vs.\ 3.04), confirming that full-context steganographic encoding introduces minimal perceptible degradation. WinStega stegotext, by contrast, scores notably lower (1.88), reflecting the quality loss caused by the na\"ive context truncation strategy. Our ASW stegotext achieves a fluency score of 2.56, substantially outperforming WinStega and narrowing the gap with full-context stegotext. 

\section{Supplementary Robustness Scenarios}
\label{sec: Supplementary Robustness Scenarios}
This section supplements Section~\ref{sec: Robustness Scenarios}.  
At first, we justify our use of the ratio of positions with unaffected inference, a proxy measure not directly tied to the embedded bits, as an indicator of robustness.
\begin{itemize}
    \item \textbf{Generalizability.} Linguistic steganographic methods differ in how they encode bits and sampling, but the common point is that they all depend on the stability of the LM conditional distribution at each generative step. The ratio of positions with unaffected inference captures exactly this stability regardless of the specific encoding strategy. 
    Therefore, this ratio functions as a fundamental \textbf{method-agnostic} indicator of robustness.
    \item \textbf{Availability.} Classical metrics such as bit error rates (BER) cannot be used directly when the extracted message length differs from the original. In addition, when the encoded candidate pool is not the entire vocabulary, it is possible to encounter failures in extracting an attacked stegotext. 
    In view of these issues, we adopt the ratio of positions with unaffected inference as a stably available metric to indicate robustness in widely diverse cases.
\end{itemize}

For deletion scenarios, random $m$ tokens were deleted from the original generated stegotext, and Table~\ref{table: robustness_deletion} lists robustness results when $(w,m)$ pairs vary.
For insertion scenarios, random $m$ tokens were randomly inserted into the original generated stegotext, and Table~\ref{table: robustness_insertion} lists robustness results when $(w,m)$ pairs vary.
Similar to Table~\ref{table: robustness} in Section~\ref{sec: Robustness Scenarios}, the superiority of robustness of ASW is consistently shown compared to other methods.

\section{Case Study: Self-Distillation based on LoRA}
As discussed in Section~\ref{sec: Tunable Soft Bridge Context via Self-Distillation} and the subsequent sections, a tunable soft bridge context via self-distillation is an effective way to reduce the divergence between inference within the full context and inference within a limited context window. This approach is inspired by the intuition behind the hard bridge context (Section~\ref{sec: Intuition: Hard Bridge Context}). However, a tunable soft-bridge context is not strictly required to implement self-distillation; it can alternatively be achieved by tuning model parameters or by employing an adapter. In this section, we examine the effects of employing LoRA (Low-Rank Adaptation)~\cite{hu2022lora} within our ASW framework.

The LoRA-based distillation framework is illustrated in Figure~\ref{fig: Self-disttilation_LoRA}. The training procedure follows the approach described in Section~\ref{sec: Tunable Soft Bridge Context via Self-Distillation}, with the key difference being the injection of trainable low-rank decomposition matrices into each layer of the Transformer architecture (i.e., the LoRA module). In the following experiments, we set the LoRA rank to 4, dropout to 0.1, and the scaling factor to 16. All other experimental setups and training–evaluation strategies follow Section~\ref{sec: General Setup}. 

Table~\ref{table: Result_lora} reports the average results under different adaptation methods. The results show slight improvements when incorporating LoRA compared to using only a soft bridge context. However, LoRA introduces a substantially larger number of trainable parameters (over 1 million) compared to just 28,672 parameters for eight trainable soft tokens ($l_{\text{bridge}} \times e = 8 \times 3584 = 28,672$). These findings suggest that lightweight training strategies can nearly achieve the upper bound of performance in this task, while \textit{increasing the number of trainable parameters yields only marginal improvements}.

\section{Robustness in Practical Implementations}
In addition to external alterations, in this section, we introduce two more practical issues which can affect robustness of linguistic steganography.

\subsection{Tokenization Inconsistency}
\label{appendix: Internal Tokenization Inconsistency}
In scenarios of linguistic (LM-based) steganography, there is a detokenization-retokenization pipeline between Alice and Bob. Specifically, Alice detokenizes the generated steganographic tokens into a stegotext and send it to Bob; Bob retokenizes the received stegotext into steganographic tokens for further extraction. However, Alice's steganographic tokens may not be the same as Bob's, which can be caused by irregularities in the training process, such as underrepresentation in training~\cite{geiping2024coercing}. In addition, a term ``unreachable tokens'' explicitly introduces that some tokens are never produced as a result of tokenizing text~\cite{land2024fishing}.

To address this issue for steganography, the existing countermeasures~\cite{nozaki-murawaki-2022-addressing,yan2023A,10831370,qi2024provably} let Bob get rid of the tokenization usage. 
Bob can directly extract the row stegotext  (instead of tokens) correctly ($m_s = m_s^{\prime}$). Recently,~\citet{yan2025addressingtokenizationinconsistencysteganography} has first addressed the tokenization inconsistency in steganography.
However, all of these methods do not consider robustness issues caused by adversarial alterations.

\subsection{Hardware Indeterminism}

Hardware indeterminism~\cite{atil2025nondeterminismdeterministicllmsettings} refers to the phenomenon where LLM outputs and the probability distribution of the next token, given the same context, can vary across different hardware environments and configurations. Such variability poses serious risks for reliable message extraction in real-world deployments, as even minor shifts in probability values may lead to incorrect decoding. This challenge highlights the need to explicitly address hardware-level robustness as a distinct design dimension for practical steganographic systems in future research.

\section{Why Forward KL Achieves Both High Quality and High Capacity}
\label{sec:forward-reverse-kl}

A natural question arises from our soft bridge results: \emph{why does Forward KL training yield both high text quality and high embedding capacity simultaneously, whereas Reverse KL consistently leads to lower capacity?} We provide a theoretical explanation grounded in the well-known asymmetry between the two divergence directions~\cite{minka2005divergence,bishop2006pattern}.

\paragraph{Forward KL is mass-covering.}
Forward KL penalizes the bridge distribution whenever it assigns insufficient probability to any region where the full-context distribution places non-negligible mass. As a result, the trained soft bridge tends to preserve and even amplify the full support and entropy of the teacher distribution. Because embedding capacity in linguistic steganography is directly related to the number of plausible candidate tokens with reasonable probability, this mass-covering behavior naturally maintains a larger set of admissible tokens at each generation step, leading to higher capacity.

\paragraph{Reverse KL is mode-seeking.}
Reverse KL allows the bridge distribution to concentrate heavily on a few dominant modes of the teacher distribution while collapsing low-probability regions. This narrows the support of the conditional distribution and substantially reduces entropy, leaving far fewer high-probability candidate tokens available for embedding. Consequently, Reverse KL consistently leads to lower embedding capacity.

\paragraph{Summary.}
The higher capacity of the Forward-KL-trained soft bridge is a direct consequence of its mass-covering optimization objective, which supports distributional diversity across the vocabulary, whereas Reverse KL's mode-seeking nature inherently restricts the usable token set. Regarding text-quality metrics, although Forward KL achieves better perplexity (PPL), its performance is close to that of Reverse KL on other metrics (particularly ROUGE-L). Therefore, the ``convenient'' advantage of Forward KL is primarily reflected in its higher embedding capacity rather than in notably superior text quality.

\begin{table*}[!t]
\renewcommand{\arraystretch}{1.0}
\centering
\scalebox{0.72}{
\begin{tabular}{l|l|cccc|c|cc}
\toprule[1pt]
 \multirow{2}{*}{\textbf{Soft bridge context}} & \multirow{2}{*}{$l_{\text{bridge}}$}  & \multicolumn{4}{c|}{\textbf{Text quality}} & \textbf{Imperceptibility}      & \multicolumn{2}{c}{\textbf{Efficiency}} \\ \cline{3-9}
                        &                                    & $\Delta$PPL$\downarrow$  & BLEU$\uparrow$  & ROUGE-L$\uparrow$ & BERTScore$\uparrow$ & Steganalysis ACC$\downarrow$ & Capacity$\uparrow$     & Time (s)$\downarrow$     \\ \hline \rowcolor{gray!20} 

0 token &  0      &   5.157   &   0.220    &  0.190     &     0.806      &        0.885               &    0.966     &      29.702              \\ \hline

\multirow{5}{*}{$\theta_{\text{bridge}}$ trained with forward KL}   &    1     &                    3.769   &  0.231    &  0.192    &   0.809       &   0.860   &   0.984   &  30.458     \\
&    4     &    1.394   &  0.268   &  0.197    &  0.824    &  0.795    &  1.534    &  30.744  \\
&   8      &    0.201     &  0.276   &  0.200    &  0.828  &     0.745     &  1.606    &  32.594          \\
&   16      & 0.939 &  0.276   &   0.199   &  0.827    &   0.755       &  1.712    &    37.501        \\
&   32      & 1.614 &  0.278   &    0.198  &   0.828   &  0.755   &  1.706    &   50.610   \\ \hline
\multirow{5}{*}{$\theta_{\text{bridge}}$ trained with reverse KL}   &    1     &    4.238               &  0.229   &   0.193   &  0.808    &  0.845   &  0.979    &  30.099          \\
&     4    &    4.288 &  0.264   &  0.197    &  0.822    &  0.810   &  1.029    & 30.889 \\
&     8    &   4.336 &  0.267   &  0.200    &  0.825    &    0.770      &   1.124   &   31.499         \\
&     16    &   4.477 &  0.267   &   0.201   & 0.826     &  0.775   &  1.129    &   37.095    \\
&     32   &    4.918 &  0.266   &  0.202    &  0.826    &    0.750      &  1.120    & 50.305      \\ \bottomrule[1pt]                  
\end{tabular}}
 \caption{Average results across various metrics under different lengths of the soft bridge context in our ASW framework, where the setting is Qwen2.5-7B-Instruct, and $w = 10$.}
 \label{table: Effect of l}
\end{table*}

\begin{table*}[!t]
\renewcommand{\arraystretch}{1.0}
\centering
\scalebox{0.68}{
\begin{tabular}{l|l|cccc|c|cc}
\toprule[1pt]
\multirow{2}{*}{\textbf{Method}} & \multirow{2}{*}{\textbf{Bridge context}} & \multicolumn{4}{c|}{\textbf{Text quality}} & \textbf{Imperceptibility}      & \multicolumn{2}{c}{\textbf{Efficiency}} \\ \cline{3-9}
                        &                                    & $\Delta$PPL$\downarrow$  & BLEU$\uparrow$  & ROUGE-L$\uparrow$ & BERTScore$\uparrow$ & Steganalysis ACC$\downarrow$ & Capacity$\uparrow$     & Time (s)$\downarrow$     \\ \hline \rowcolor{gray!20} 
\textbf{Full context}            & N/A    &  1.090   &  0.405     &  0.257     &    0.856       &       0.560                &   1.024      &   115.43   \\  \hline \rowcolor{blue!8} 
\multicolumn{9}{c}{$w = 10$} \\ \hline
\textbf{WinStega}                & N/A                                &  31.486    &  0.082  &    0.098   &    0.699       &       0.955               &  1.995       &       22.250               \\ \hline
 \textbf{ASW}   & 0 token                            &   5.157   &   0.220    &  0.190     &     0.806      &        0.885               &    0.966     &      29.702                \\ \hline
            \multirow{3}{*}{\textbf{ASW (hard)}}            & \texttt{``[CONTEXT TRUNCATED]\textbackslash n''}                             &  3.246    &   0.242    &   0.195    &    0.816       &    0.830                   &   1.012      &   30.868                   \\
                        & \texttt{``...\textbackslash n''}                             &  4.560    &   0.248    &   0.195    &     0.816      &         0.815              &     1.035    &       30.505               \\ \cline{1-9}
\multirow{3}{*}{\textbf{ASW (soft)}}   & Random untrained  $\theta_{\text{bridge}}$                           &  4.066    &   0.200    &   0.185    &    0.797       &    0.910      &     0.980    &    31.819                  \\
                        & $\theta_{\text{bridge}}$ trained with forward KL                       & 0.201    & 0.276    &  0.200     & 0.828           &     0.745                  &   1.606      &     32.594                 \\
                        & $\theta_{\text{bridge}}$ trained with reverse KL                      &  4.336    &   0.267    &    0.200    &    0.825       &       0.770                &    1.124     & 31.499   \\ \hline \rowcolor{blue!8} 
\multicolumn{9}{c}{$w = 30$} \\ \hline
\textbf{WinStega}                & N/A         & 10.290    & 0.190  &  0.150  &  0.771   &   0.930  & 1.676  & 24.262                \\ \hline
 \textbf{ASW}  & 0 token        &  4.578   & 0.335  &  0.219  & 0.841    &  0.725   & 1.087  & 46.682       \\ \hline
          \multirow{2}{*}{\textbf{ASW (hard)}}               & \texttt{``[CONTEXT TRUNCATED]\textbackslash n''}       &  1.068   & 0.351  &  0.224  &  0.844   &  0.720   & 1.116  &  48.654 \\
                        & \texttt{``...\textbackslash n''}      &  2.331   & 0.345  & 0.222   &  0.842   &  0.715   & 1.142  &  46.230             \\ \cline{1-9}
\multirow{3}{*}{\textbf{ASW (soft)}}   & Random untrained  $\theta_{\text{bridge}}$                          &  2.025   & 0.324  &  0.215  &  0.837   &   0.725  & 1.066  & 48.925                \\
                        & $\theta_{\text{bridge}}$ trained with forward KL                       & 0.301   & 0.362  & 0.229   &  0.847   &  0.695   & 1.320  &  48.529       \\
                        & $\theta_{\text{bridge}}$ trained with reverse KL                      &   3.979  & 0.357  &  0.230  &   0.847  &  0.705   & 1.099 & 47.708 \\ \hline \rowcolor{blue!8} 
\multicolumn{9}{c}{$w = 50$} \\ \hline
\textbf{WinStega}                & N/A     & 7.289   &  0.255  &  0.178  &  0.801   &  0.905   & 1.335 &  29.635          \\ \hline
  \textbf{ASW}  & 0 token  &  0.862   & 0.363  &  0.230  & 0.846    &  0.685   & 1.108 & 48.457        \\ \hline
         \multirow{2}{*}{\textbf{ASW (hard)}}               & \texttt{``[CONTEXT TRUNCATED]\textbackslash n''}                             &  3.794   & 0.364  &  0.232  &  0.848   &  0.670   & 1.128 & 49.519   \\
                        & \texttt{``...\textbackslash n''}     &  0.544   & 0.361  &  0.229  &  0.846   &   0.675  & 1.138  &  50.504            \\ \cline{1-9}
\multirow{3}{*}{\textbf{ASW (soft)}}   & Random untrained  $\theta_{\text{bridge}}$                           &  1.397   & 0.355  &  0.227  &   0.845  &  0.700   & 1.086 &  49.183             \\
                        & $\theta_{\text{bridge}}$ trained with forward KL                       &  0.195   & 0.370  &  0.235  &  0.850   &  0.670   & 1.178  & 50.319                 \\
                        & $\theta_{\text{bridge}}$ trained with reverse KL                      &   2.741  & 0.368  &  0.235  &  0.849   &   0.665  & 1.115  & 49.986  \\ \bottomrule[1pt]                  
\end{tabular}}
 \caption{Average results across various metrics under different lengths of the latest tokens and different methods, where the setting is Qwen2.5-7B-Instruct, and $l_{\text{bridge}} = 8$.}
 \label{table: Effect of w}
\end{table*}

\end{document}